\DeclareMathOperator{\PP}{\mathrm{P}}
\newcommand\ceil[1]{\left\lceil#1\right\rceil}
\newcommand\floor[1]{\left\lfloor#1\right\rfloor}
\newcommand\norm[1]{\left\lVert#1\right\rVert}
\newcommand\paren[1]{\left(#1\right)}
\newcommand\curlyb[1]{\left\{#1\right\}}
\newcommand\squareb[1]{\left[#1\right]}
\newcommand\tsc[1]{\text{\textsc{#1}}}
\DeclareMathOperator{\R}{\mathbb{R}}
\DeclareMathOperator{\Z}{\mathbb{Z}}
\DeclareMathOperator{\F}{\mathcal{F}}
\DeclareMathOperator{\VC}{VC}
\DeclareMathOperator{\nn}{NN}
\begin{document}


\title{Towards Practical Finite Sample Bounds for Motion Planning in TAMP}
\titlerunning{Towards Practical Finite Sample Bounds for Motion Planning in TAMP}
%
\author{
    Seiji Shaw, Aidan Curtis, \\Leslie Pack Kaelbling, Tom\'as Lozano-P\'erez, and Nicholas Roy
}

\newcommand{\sas}[1]{\textcolor{teal}{\textbf{SS: #1}}}

\authorrunning{
S. Shaw, A. Curtis, L.P. Kaelbling, T. Lozano-P\'erez, and N. Roy
}
%
\institute{Computer Science and Artificial Intelligence Laboratory, \\Massachusetts Institute of Technology, Cambridge, Massachusetts, USA\\
\texttt{\{seijis,curtisa,lpk,tlp,nickroy\}@csail.mit.edu}
}

\maketitle              
\begin{abstract}
When using sampling-based motion planners such as PRMs, it is difficult to determine how many samples are required for the PRM to find a solution consistently.
This is particularly relevant in Task and Motion Planning (TAMP), where many motion planning problems must be solved sequentially.
We attempt to address this problem by proving an upper bound on the number of samples that are sufficient, with high probability, for a radius PRM to find a \textit{feasible} solution, drawing on prior work in deterministic sampling and sample complexity theory.
We also introduce a numerical algorithm that refines the bound based on the proofs of the sample complexity results we leverage.
Our experiments show that our numerical algorithm is tight up to two to three orders of magnitude on planar problems for radius PRMs but becomes looser as the problem's dimensionality increases.
The numerical algorithm is empirically more useful as a heuristic for estimating the number of samples needed for a KNN PRM in low dimensions.
When deployed to schedule samples for a KNN PRM in a TAMP planner, we also observe improvements in planning time in planar problems.
While our experiments show that much work remains to tighten our bounds, the ideas presented in this paper are a step towards a practical sample bound.

\keywords{Motion and Path Planning, Task Planning, Algorithmic Completeness and Complexity}
\end{abstract}

\section{Introduction}
Sampling-based motion planners, such as PRMs, perform well in configuration spaces of moderately high dimension.
It is difficult, however, to determine how many samples a PRM needs to find a solution consistently.
This problem is critical when sampling-based motion planners are used for Task and Motion Planning (TAMP), where many motion planning problems must be solved in sequence.
Existing TAMP solvers address this problem by manually selecting a uniform threshold on the motion planner's computational effort on a problem \cite{hauser_integrating_2009,garrett_pddlstream_2020}.
We seek a more principled method of estimating sample thresholds.

We attempt to address our problem by proving a bound on the number of random samples for a PRM to find a solution with high probability. 
First, we must choose a parametrization of the motion planning problem on which we derive our bound.
Our work builds on two problem parametrizations and sample bounds.
Tsao et al. suggest using the widths of passages in the configuration space as a parameterization \cite{tsao_sample_2019}.
They derive the size of a \textit{deterministic} sample set that covers the space at a specified density to yield PRMs guaranteed to have paths in these narrow passages. 
While their parameterization is simple, TAMP systems generally cannot afford to use deterministic sample sets. 
Since the actual widths of passages are unknown \textit{a priori}, many planners return to a previously attempted motion planning problem to refine the graph when a motion plan has not been found on a previous iteration. 
The refined graph must consist of a denser set of deterministic samples that must be recomputed from scratch, wasting previous computation \cite{niederreiter_random_1992,janson_deterministic_2018}.
Hsu et al. introduce a bound for random samples, which can be incrementally refined. 
Their bound uses three parameters that describe the configuration space's `expansiveness,' or visibility properties points and connected sets \cite{hsu1997path}.
This complex parameterization complicates the choice of a procedure to refine the parameters when the planner reattempts a problem.
We wish to have a simple problem parameterization, such as the one provided by Tsao et al., but for random sample sets that can be incrementally constructed in the manner of Hsu et al.

This paper presents a sample bound that extends Tsao et al.'s radius PRM guarantee to random samples.
The core of our argument draws a connection to work in sample complexity by Blumer et al. \cite{blumer_learnability_1989} that estimates the probability of finding covering point configurations of a specific density.
When Blumer et al.'s result is applied in practice, their bound significantly overestimates the number of sufficient samples.
Rather than using the explicit formula stated in the theorem of Blumer et al., we present a numerical algorithm that searches over a tighter expression in their proof \cite{blumer_learnability_1989} to compute a less conservative estimate of the number of samples.
We also extend our radius PRM result to the KNN PRM. 
Our analysis falls short of a non-vacuous bound on the number of sufficient samples, but we include the analysis here for a future finite-sample bound.

Our experiments evaluate the utility of the numerical algorithm in a TAMP planner. 
In radius PRMs, we find that the numerical bounding algorithm is tightest in planar motion planning problems and loosens as the number of dimensions in the configuration space increases.
However, the same numerical algorithm appears to be more effective as a heuristic to guide the sampling effort of a KNN PRM than as a guarantee for the radius PRM itself in low dimensions.
Consequently, the bound also speeds up planning time for planar TAMP problems but shows limited improvement on higher-dimensional problems.
Even so, we believe the theoretical machinery and algorithmic ideas presented in this paper are a step towards a practical bound.

\section{Background and Problem Formulation}\label{sec:problem}
We aim to estimate the sampling effort required to solve a constituent motion planning problem in a task skeleton proposed by a task planner. 
Alg. \ref{alg:tamp_planner} sketches out a general TAMP planning strategy we use in our problem setting, which we walk through in this section.
We use the same PRM definition as Karaman and Frazzoli \cite{karaman_sampling-based_2011}, which we also include in the supplemental material for completeness.

The TAMP planner (Alg. \ref{alg:tamp_planner}) uses a priority queue of task plans or skeletons,  each of which proposes a sequence of motion planning problems to be solved by the PRM (lines \ref{ln:mp-solve-start}-\ref{ln:mp-solve-end}).
If solutions to the motion planning problems are found, they can be combined with the skeleton for a complete solution to the planning problem.
A motion planning problem consists of a configuration space $X \subset \R^d$, a collision-free configuration space $X_{free} \subseteq X$, a start pose $x_s \in X$, and a goal pose $x_g \in X$, which we will denote by the tuple $(X, X_{free}, x_s, x_g)$. 
We assume that $X_{free}$ is Borel measurable and can be sampled uniformly directly, so all our theoretical results and experiments will be done using $X_{free}$ directly without much reference to $X$.
In practice, the measure of $X_{free}$ can be well-approximated efficiently by Monte Carlo integration \cite{boucheron_concentration_2013,vershynin_high_2018}.

\begin{algorithm}[ht]
    \caption{\sc{TampPlanner}}\label{alg:tamp_planner}
    \begin{algorithmic}[1]
        \Require TAMP problem $p$=(start, goal, \dots)
        \State $tp \gets \tsc{TaskPlanner(p)}$
        \For{$(skeleton, progress) \in tp.\tsc{SkeletonQueue()}$}\label{ln:priority-queue}
            \State $success, paths \gets \tsc{True}, []$ 
            \For {$(X, X_{free}, x_s, x_g)$ $\in skeleton$}, \label{ln:mp-solve-start}
               \State $G_{PRM}, \delta, \gamma \gets \tsc{RestoreProgress}(progress)$ \Comment{$\delta$ is passage width.}
               \State $N_{new} \gets \tsc{\textbf{ComputeNewSamples}}(\tsc{vol}(X_{free}), \delta, \gamma)$\label{ln:compute-num-samples}\Comment{$\gamma$ is failure prob.}
               \State $G_{PRM} \gets \tsc{sPrmConstruction}(N_{new}, X_{free}, G_{PRM})$
               \State $path \gets \tsc{sPrmQuery}(x_s, x_g, G_{PRM})$
               \State $progress.\tsc{update}(X_{free}, x_s, x_g, G_{PRM})$
               
               \State \textbf{if} $path \neq \emptyset$ \textbf{then} $paths.\tsc{append}(path)$ \textbf{else}\label{ln:mp-solve-end}
                  
                   \State\;\;\; $\delta',\gamma' \gets \tsc{AdjustWidthAndFailure}(\delta)$\label{ln:shrink-path-width}
                   \State\;\;\; $skeleton.\tsc{UpdateMpProblem}(X_{free}, x_s, x_g, \delta',\gamma')$
                   \State\;\;\; $tp.\tsc{QueuePush}(skeleton)$
                   \State\;\;\; $success \gets \tsc{False}$
                   \State\;\;\; \textbf{break}
          
            \EndFor
            \If{$success$}:
                \State \Return $paths$
            \EndIf
        \EndFor
    \end{algorithmic}
\end{algorithm} 

Whenever sampling-based motion planners are used as a subroutine in TAMP, the planner must decide how many samples to use in each problem (line \ref{ln:compute-num-samples}).
Our main objective is to develop a mathematically well-founded bound on the number of samples required for a PRM to solve the problem (with high probability) and then write an efficient implementation of that bound.
Our bound depends on the width $\delta$ of the passages in $X_{free}$ to find paths and the probability $\gamma$ of failing to find a path even if one exists.
As a result, Alg. \ref{alg:tamp_planner} first computes samples by assuming wide paths on the initial attempt to solve each motion planning subproblem with some initial probability. 
On successive iterations of the same motion planning problem, the algorithm progressively shrinks the path width and decreases the probability of failing to find a path (thus increasing the number of samples; line \ref{ln:shrink-path-width}).
We leave the problem of using the local failure probabilities to compute a global failure probability of the whole planning algorithm as an interesting direction for future work.
In this implementation of Alg. \ref{alg:tamp_planner}, a skeleton containing an infeasible motion planning problem will always remain in the queue (but may not be given a high priority depending on the task planner).
Alternatively, a minimum feasibility threshold can be set using a minimum passage width based on the known limitations of the robot's control infrastructure to follow a trajectory in a narrow corridor.
Should the planner fail to find a motion plan in a skeleton on an iteration using the minimum width, the planner could remove the skeleton from the queue.

The only notation we require is that $B_r^d(x)$ represents a $d$-dimensional ball with radius $r$ centered at point $x$. 
If the location of the ball does not matter (if we are only discussing its volume, etc.), we will abbreviate to $B_r^d$.

\vspace{-0.1cm}
\section{Random Covering Bounds for Radius PRMs}\label{sec:rad_bound}


We begin by first proving a sample bound for radius PRMs.
These PRMs approximate the configuration space with a set of points, where a single point represents a ball region with a radius equivalent to the connection radius.
Tsao et al. observed that if a set of samples forms a sufficiently dense covering of the configuration space, the sample set will be a good approximation of the entire space.
They then showed that roadmaps built from dense covering sets are guaranteed to find paths in narrow passages of a specific width.
In this section, we introduce the terminology to define precisely what we mean by a narrow passage, a covering set of samples, and the relation between the two for the PRM guarantee.
Proofs of all results in this section are in the supplemental material.\footnote{Supplemental material are made available at \texttt{https://sageshoyu.github.io/\\assets/wafr-2024-supp.pdf}.}

\begin{figure}[h!]
    \centering
    \includegraphics[width=0.95\textwidth]{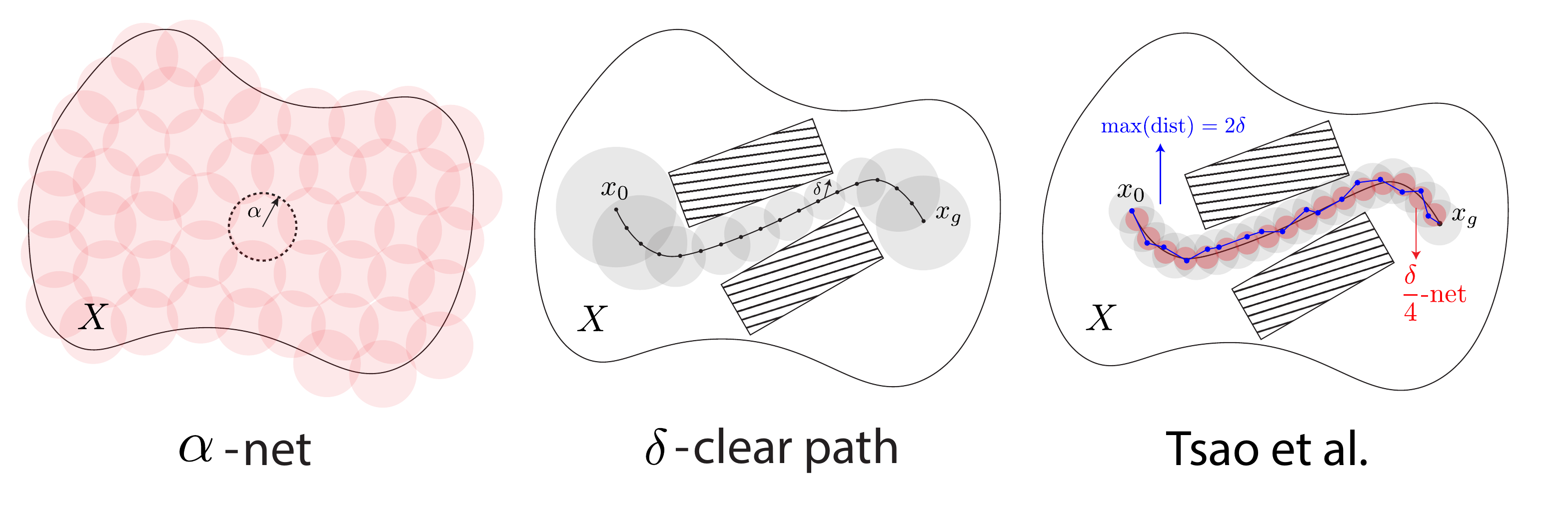}
    
    \caption{\textbf{Left:} An $\alpha$-net, where the points are interpreted as centers of balls, with radius $\alpha$, that collectively cover the rectangle. \textbf{Middle:} An example of a $\delta$-clear path. \textbf{Right:} A depiction of how a covering as dictated by Lemma \ref{lem:coverings_find_passages} yields a roadmap that finds paths in narrow passages.}
    \label{fig:tsao_results}
\end{figure}

We start with the definition of narrow passages.
If there exists a collision-free solution from $x_s$ to $x_g$, then we can associate a `clearance' with the path, the extent to which the robot can deviate from the path without hitting obstacles:
\begin{definition} [Path Clearance]
    Let $(X, X_{free}, x_0, x_g)$ be a motion planning problem. A continuous path $\sigma:[0, 1] \to X$ s.t. $\sigma(0) = x_0$ and $\sigma(1) = x_g$ is \textit{$\delta$-clear} if $B_\delta (\sigma(t)) \subset X_{free}$ for all $0 \leq t \leq 1$. 
\end{definition}
We can imagine sliding a ball with radius $\delta$ by its center along the path.
Should the ball not hit an obstacle, then the path is $\delta$-clear (Fig. \ref{fig:tsao_results}). 
The largest possible $\delta$ characterizes the `width' of the passage through which the path passes. 

The mathematical term for a covering set of points is called an $\alpha$-net. 
A common intuitive definition is first to associate each point with a ball region of radius $\alpha$ centered around the point.
Should the union of the $\alpha$-balls subsume the entire configuration space (`covers' the configuration space), then we call the set of points an $\alpha$-net (Fig. \ref{fig:tsao_results}).
We work with an equivalent definition:
\begin{definition}[$\alpha$-net]\label{def:e-net}
    A finite subset $N \subset X_{free}$ is an $\alpha$-net of $X_{free}$ if for every $x \in X_{free}$ there exists $x_n \in N$ such that $||x - x_n||_2 \leq \alpha$. 
\end{definition}
\noindent We now state a claim proved by Tsao et al. 
A visual depiction of the proof can be found in Fig. \ref{fig:tsao_results}c (and the full proof can be found in the supplemental material).
\begin{lemma}[$\alpha$-nets find paths in $2\alpha$-clear passages]\label{lem:coverings_find_passages}
    Suppose that $N \subset X_{free}$ is a finite set of samples that forms an $\alpha$-net of $X_{free}$, used to construct a radius PRM with connection radius $4\alpha$.
    For all $x_s, x_g \in X_{free}$, if there exists a solution path between $x_s, x_g$ that has clearance $2\alpha$, then the resulting PRM will return a feasible path from $x_s$ to $x_g$.
\end{lemma}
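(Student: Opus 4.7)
The plan is to construct an explicit path in the PRM by selecting witness samples along the guaranteed $2\alpha$-clear continuous path $\sigma:[0,1]\to X_{free}$, and then verifying that consecutive witnesses are connected by valid PRM edges.

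First, I would discretize $\sigma$. Pick a partition $0 = t_0 < t_1 < \cdots < t_k = 1$ fine enough that $\|\sigma(t_i) - \sigma(t_{i+1})\|_2 \le 2\alpha$ for each $i$ (such a partition exists by uniform continuity of $\sigma$ on the compact interval $[0,1]$). Using Definition \ref{def:e-net}, for each $i$ pick a witness sample $n_i \in N$ with $\|n_i - \sigma(t_i)\|_2 \le \alpha$; similarly pick $n_s, n_g \in N$ within $\alpha$ of $x_s$ and $x_g$, which exist because $x_s, x_g \in X_{free}$. The goal is now to show that the sequence $x_s, n_s, n_0, n_1, \ldots, n_k, n_g, x_g$ traces out a valid path in the PRM.

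Second, I would verify the edges exist, i.e., that consecutive witnesses are within the connection radius $4\alpha$. By the triangle inequality,
\[
\|n_i - n_{i+1}\|_2 \le \|n_i - \sigma(t_i)\|_2 + \|\sigma(t_i) - \sigma(t_{i+1})\|_2 + \|\sigma(t_{i+1}) - n_{i+1}\|_2 \le \alpha + 2\alpha + \alpha = 4\alpha,
\]
and analogously $\|x_s - n_s\|_2 \le \alpha \le 4\alpha$, $\|n_s - n_0\|_2 \le 2\alpha$, and likewise at the goal end.

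Third, and this is the main technical step, I would verify that each such edge is collision-free. Consider an arbitrary point $p = (1-\lambda)n_i + \lambda n_{i+1}$ on the straight-line edge for $\lambda \in [0,1]$. Compare it to the corresponding convex combination $q = (1-\lambda)\sigma(t_i) + \lambda \sigma(t_{i+1})$. By convexity of the norm,
\[
\|p - q\|_2 \le (1-\lambda)\|n_i - \sigma(t_i)\|_2 + \lambda\|n_{i+1} - \sigma(t_{i+1})\|_2 \le \alpha.
\]
Next, $q$ lies on the chord between two points of $\sigma$ that are within $2\alpha$ of each other; in particular $\|q - \sigma(t_i)\|_2 \le \lambda \cdot 2\alpha \le 2\alpha$. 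Since $\sigma$ is $2\alpha$-clear, the ball $B_{2\alpha}(\sigma(t_i))$ is contained in $X_{free}$, so $q \in X_{free}$. More carefully, I would argue that $p$ itself lies in $B_{2\alpha}(\sigma(t_i))$ via $\|p - \sigma(t_i)\|_2 \le \|p - q\|_2 + \|q - \sigma(t_i)\|_2 \le \alpha + 2\alpha$; to get everything inside the $2\alpha$-clearance tube I may need to refine the partition further so that $\|\sigma(t_i) - \sigma(t_{i+1})\|_2 \le \alpha$, giving $\|p - \sigma(t_i)\|_2 \le \alpha + \alpha = 2\alpha$ and thus $p \in B_{2\alpha}(\sigma(t_i)) \subset X_{free}$. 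The same bookkeeping handles the edges $(x_s, n_s)$, $(n_s, n_0)$, $(n_k, n_g)$, and $(n_g, x_g)$ at the endpoints, where the clearance assumption at $\sigma(0), \sigma(1)$ is used.

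The main obstacle is the last step: tuning the partition fineness so that every point on every line-segment edge is simultaneously within the $2\alpha$-clearance tube of some $\sigma(t_i)$. The triangle-inequality bookkeeping is elementary but slightly delicate because one must combine the $\alpha$ slack from the net with the $\alpha$ (or $2\alpha$) chord error from the discretization without exceeding the $2\alpha$ clearance of $\sigma$. Once the partition is chosen appropriately fine, the rest of the argument is immediate: the listed edges all exist in the radius-$4\alpha$ PRM and are all collision-free, so the PRM query returns a feasible path from $x_s$ to $x_g$.
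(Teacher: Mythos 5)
Your proof is correct and follows essentially the same strategy as the paper's: discretize the $2\alpha$-clear path, use the $\alpha$-net to select witness samples near the discretization points, and use convexity of the $2\alpha$-clearance balls to certify that consecutive witnesses (at distance at most $4\alpha$) are joined by collision-free edges within the connection radius. The only cosmetic difference is that the paper spaces its discretization points by arclength and inserts midpoints $q_j$ whose $\alpha$-balls lie inside consecutive clearance balls, whereas you carry out the equivalent bookkeeping directly via the triangle inequality on a uniform-continuity partition (refined to spacing $\alpha$).
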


Tsao et al. were interested in deterministic algorithms to construct $\alpha$-nets based on layered point lattices \cite{tsao_sample_2019}.
However, their sample sets are constructed using a fixed net radius $\alpha$ and must be reconstructed from scratch if a finer net is required.
Because TAMP planners like Alg. \ref{alg:tamp_planner} may return to previously initiated motion planning efforts, we wish to increment previously built PRMs to save and reuse computational effort as much as possible. 
In contrast to deterministic sets, collections of samples obtained from random sampling can be naturally incremented when an algorithm revisits a motion planning problem. 

To determine the probability that a set of random samples forms an $\alpha$-net, Blumer et al., in their work on probably approximately correct learning (Theorem 2.1 \cite{blumer_learnability_1989}), provide a bound, which we take as a lemma. 
\begin{lemma}[Number of random samples to form an $\alpha$-net]\label{lem:blumer_e_net_bound}
   Let $X_{free} \subset \R^d$ be a collision-free configuration space, and let $\PP$ be the uniform (Lebesgue) measure on $X_{free}$ and $\gamma \in (0, 1)$ be a failure probability.
   Suppose $N \subset X_{free}$ is a set of $n$ i.i.d. uniform random samples. If: 
   \begin{equation}\label{eqn:blumer_sample_cond}
   n \geq \max \curlyb{
    \frac 4 {\PP(B^d_\alpha)} \log_2 \frac 2 \gamma,
    \frac {8d}{\PP(B^d_\alpha)}  \log_2 \frac {13}{\PP(B^d_\alpha)}
   },
   \end{equation}
   then $N$ is an $\alpha$-net of $X_{free}$ with probability $1 - \gamma$.
\end{lemma}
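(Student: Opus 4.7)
My plan is to obtain this lemma as a near-direct application of Blumer et al.'s $\epsilon$-net theorem (their Theorem 2.1). That theorem asserts that for any range space of VC dimension $v$ under a probability measure $\PP$, an i.i.d.\ sample of size at least $\max\!\left(\tfrac{4}{\epsilon}\log\tfrac{2}{\gamma},\ \tfrac{8v}{\epsilon}\log\tfrac{13}{\epsilon}\right)$ intersects every range of $\PP$-measure at least $\epsilon$ with probability at least $1-\gamma$. This matches Eq.~(\ref{eqn:blumer_sample_cond}) exactly with $\epsilon = \PP(B_\alpha^d)$ and $v = d$, so the work is in identifying the correct range space, bounding its VC dimension, and checking the measure condition.

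I would take the range space to be $\mathcal R = \{B_\alpha^d(x) \cap X_{free} : x \in X_{free}\}$ on the ground set $X_{free}$ equipped with the uniform measure $\PP$. By Def.~\ref{def:e-net}, $N$ is an $\alpha$-net of $X_{free}$ iff $N$ hits every member of $\mathcal R$, which reduces the statement directly to the $\epsilon$-net conclusion. For the VC-dimension bound, I would appeal to the classical fact that the family of closed Euclidean balls in $\R^d$ has VC dimension at most $d+1$, and that intersecting every range in a family with the fixed set $X_{free}$ cannot increase VC dimension; the mild slack between $d$ and $d+1$ is absorbed into the constants that Blumer et al.\ track through their proof. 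Applying the $\epsilon$-net theorem with these choices then yields the stated bound.

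The nontrivial step, and the one I expect to spend the most care on, is justifying $\epsilon = \PP(B_\alpha^d)$ as a uniform lower bound on the measure of every range in $\mathcal R$. Because $\PP$ is uniform on $X_{free}$, we have $\PP(B_\alpha^d(x) \cap X_{free}) = \mathrm{vol}(B_\alpha^d(x) \cap X_{free}) / \mathrm{vol}(X_{free})$, which can be strictly smaller than $\mathrm{vol}(B_\alpha^d) / \mathrm{vol}(X_{free}) = \PP(B_\alpha^d)$ whenever $B_\alpha^d(x)$ protrudes outside $X_{free}$. I would handle this by first restricting attention to interior witnesses (those $x$ with $B_\alpha^d(x) \subseteq X_{free}$) and then arguing via a short covering/triangle-inequality lemma that hitting every such interior range already certifies the $\alpha$-net property at every remaining point of $X_{free}$. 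Once that geometric reduction is in place, the conclusion is immediate from the $\epsilon$-net theorem.
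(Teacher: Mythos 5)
Your overall route is the same as the paper's: the supplemental proof likewise invokes Blumer et al.'s $\epsilon$-transversal theorem (restated as Theorem \ref{thm:blumer_e_net_bound}) on a range space of Euclidean balls, converts the radius condition of Definition \ref{def:e-net} into a volume threshold $\epsilon = \alpha^d\,\PP(B_1^d) = \PP(B_\alpha^d)$ via an equivalence lemma between $\alpha$-nets and $\epsilon$-transversals (Lemma \ref{lem:geom_e-net_iff_comb_e-net}), and reads off Eq.~\ref{eqn:blumer_sample_cond}. One bookkeeping point: balls in $\R^d$ have VC dimension $d+1$, and this enters Blumer et al.'s bound multiplicatively as $8v/\epsilon$, so the slack between $d$ and $d+1$ is not simply `absorbed into the constants' --- a faithful application gives $8(d+1)$ in place of $8d$ (the paper's statement carries the same off-by-one).

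The boundary issue you isolate in your last paragraph is a genuine one, but your proposed repair does not close it at the stated radius. If you only guarantee that $N$ hits every ball $B^d_\alpha(y) \subseteq X_{free}$, then a point $x$ contained in such a ball is only within $2\alpha$ of a sample (triangle inequality through $y$), so this certifies a $2\alpha$-net rather than an $\alpha$-net; moreover, a point $x$ sufficiently close to the boundary need not lie in \emph{any} interior $\alpha$-ball unless one imposes a regularity condition on $X_{free}$. It is worth noting that the paper's own proof elides exactly this step: the backward direction of Lemma \ref{lem:geom_e-net_iff_comb_e-net} takes $S = B^d_\epsilon(x)$ for an arbitrary $x \in Y$ and treats $\PP(S)$ as if it equaled $\epsilon^d\,\PP(B_1^d)$, which fails whenever the ball protrudes outside $Y$ (a thin corridor of width $w \ll \alpha$ is a concrete setting where the stated sample count is insufficient to hit within $\alpha$ of every point). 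So you have correctly located a gap that the paper shares; closing it requires either restricting the guarantee to points $x$ with $B^d_\alpha(x) \subseteq X_{free}$, weakening the conclusion to a $2\alpha$-net, or adding an assumption of the form $\mathrm{vol}\paren{B^d_\alpha(x) \cap X_{free}} \geq c \cdot \mathrm{vol}\paren{B^d_\alpha}$ for all $x \in X_{free}$ and adjusting $\epsilon$ accordingly.
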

Putting Lemmas \ref{lem:coverings_find_passages} and \ref{lem:blumer_e_net_bound} together, we now have a bound for radius PRMs.
\begin{theorem}[Radius PRM Guarantee]\label{thm:rad_prm_bound}
    Suppose $N \subset X_{free}$ be a set of $n$ i.i.d. uniform random samples. If: 
    \begin{equation}\label{eqn:rad_prm_sample_cond}
    n \geq \max \curlyb{
        \frac 4 {\PP(B^d_\alpha)} \log_2 \frac 2 \gamma,
        \frac {8d}{\PP(B^d_\alpha)}  \log_2 \frac {13}{\PP(B^d_\alpha)}
    },
    \end{equation}
    then for all $x_s, x_g \in X_{free}$, if there exists a solution path between $x_s, x_g$ that has clearance $2\alpha$, then the a radius PRM with connection radius $4 \alpha$ will return a feasible path from $x_s$ to $x_g$ with probability $1 - \gamma$.
\end{theorem}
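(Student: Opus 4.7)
The plan is to obtain this theorem as a direct composition of the two preceding lemmas, with Lemma \ref{lem:blumer_e_net_bound} supplying the probabilistic half and Lemma \ref{lem:coverings_find_passages} supplying the deterministic (geometric) half. The key observation is that the condition stated in \eqref{eqn:rad_prm_sample_cond} on $n$ is exactly the condition \eqref{eqn:blumer_sample_cond}, so Lemma \ref{lem:blumer_e_net_bound} immediately applies.

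First, I would define the event $E = \{N \text{ is an } \alpha\text{-net of } X_{free}\}$, which is a property of the random sample set $N$ alone and does not depend on any particular query pair $(x_s, x_g)$. By Lemma \ref{lem:blumer_e_net_bound}, the hypothesis \eqref{eqn:rad_prm_sample_cond} gives $\Pr(E) \geq 1 - \gamma$. I would emphasize that the quantification over $(x_s, x_g)$ in the theorem statement can be placed after the net property holds, because $E$ is a statement about $N$ that makes no reference to the query.

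Next, I would invoke Lemma \ref{lem:coverings_find_passages} on the event $E$. Conditional on $N$ being an $\alpha$-net, the lemma asserts that for every pair $x_s, x_g \in X_{free}$ admitting a $2\alpha$-clear path, the radius PRM built from $N$ with connection radius $4\alpha$ returns a feasible solution. Hence the event that the PRM returns a feasible path for every such pair contains $E$, and therefore also occurs with probability at least $1 - \gamma$.

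There is no substantive obstacle here, as the argument is essentially a two-line composition. The one subtlety worth stating carefully is the order of quantifiers: Lemma \ref{lem:blumer_e_net_bound} already delivers a uniform guarantee on $N$, so a single invocation suffices for all query pairs simultaneously rather than requiring a union bound over queries. I would close by remarking that the probability $1 - \gamma$ is over the randomness of the samples $N$ and holds uniformly in $(x_s, x_g)$ subject to the $2\alpha$-clearance hypothesis.
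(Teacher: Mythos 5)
Your proposal is correct and matches the paper's own argument, which likewise obtains Theorem~\ref{thm:rad_prm_bound} by applying Lemma~\ref{lem:blumer_e_net_bound} to conclude that $N$ is an $\alpha$-net with probability $1-\gamma$ and then invoking Lemma~\ref{lem:coverings_find_passages} deterministically on that event. Your explicit remark about the order of quantifiers (the net property is a property of $N$ alone, so no union bound over queries is needed) is a correct and welcome clarification of a point the paper leaves implicit.
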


Before discussing ways to apply this bound to write \tsc{ComputeNewSamples} in 
Alg. \ref{alg:tamp_planner}, we first want to gain intuition about how path clearance and the dimension of the configuration space influence the number of samples.
To do this, a brief asymptotic analysis is in order.
Using Stirling's approximation, the asymptotics of the volume of a ball with radius $\delta$ as $d \to \infty$ is $\Omega \paren{(\delta/2)^d \cdot (2 \pi e)^{d/2} \cdot d^{-(d+1)/2}}$.
If we fix a constant probability $\gamma$ and desired path clearance $\delta$, we obtain the following asymptotic expression:
\begin{equation}\label{eqn:rad_sample_bound_asymp}
n = O\squareb{
    \frac {C_d} {\delta^d}  \cdot
    \paren{
        \log_2 \frac 1 \gamma 
        + d \log_2 \frac {C_d} {\delta^d}
    }
}, \quad
C_d = \frac{2^d d^{(d+1)/2}}{(2\pi e)^{d/2}}.
\end{equation}
The number of samples varies nearly linearly in $(1/\delta)^d$ and a dimension-dependent constant $C_d$, and logarithmically in $1 / \gamma$.
This result is not surprising; since one must sample enough to cover the entire space in the worst case, the sample complexity must be exponential relative to the dimension of the configuration space.
Our sample complexity argument makes this idea precise and agrees with prior work on narrow passages \cite{hsu_finding_1998,kavraki_analysis_1998}.

We \textit{could} use Theorem \ref{thm:rad_prm_bound} directly by using the right side of the inequality to choose the number of samples for \tsc{ComputeNewSamples} in Alg. \ref{alg:tamp_planner}.
In practice, this bound is quite loose.
The culprit is the maximization operation in Eq. \ref{eqn:blumer_sample_cond} in Lemma \ref{lem:blumer_e_net_bound}.
The maximization's right-hand argument shows the existence of `small' $\alpha$-nets (up to constant factors) by the probabilistic method.
The left-hand argument helps us estimate the probability of failure $\gamma$.
The constant on the right-hand side is relatively large, so $\gamma$ must be small (some hand computations suggest less than $10^{-3}$ for moderate net radii) for the left-hand argument to dominate.
To find a tighter number of samples, we must bypass the maximization operation.
We summarize Blumer et al.'s proof of Lemma \ref{lem:blumer_e_net_bound}  for insight, which estimates the probability that a sample set $N$ does \textit{not} form an $\alpha$-net:
\begin{align}
    &\mathbf P(N \text{ is not a } \alpha\text{-net.})\\
    =&\mathbf P(\exists x \in X_{free} \mbox{ s.t. } N \cap B_\alpha^d = \emptyset) 
    &&\text{Defn. $\alpha$-net, Supp. Material}\\
    \leq& \paren{\sum_{i=1}^{d+1} \binom{2|N|}{i}} \cdot 2^{-\PP(B_\alpha^d) |N|/2} \label{eq:numer_search_exp}
    &&\text{Blumer et al. Thm. A2.1}\\
    <& \paren{\frac {2e|N|} d}^d \cdot 2^{-\PP(B_\alpha^d) |N|/2}
    &&\text{Blumer et al. Prop. A2.1(iii)}
\end{align}
Substituting in the sample inequality in Lemma \ref{lem:blumer_e_net_bound} renders the last line smaller than $\gamma$, for $\gamma$ sufficiently small (Blumer et al. Lemma A2.4).
The third line (Eq. \ref{eq:numer_search_exp}) is a much tighter bound on the probability that $N$ fails to be an $\alpha$-net.
We can now write a numerical algorithm that searches for the smallest $n=|N|$ such that the third line is below a given probability $\gamma$ (Alg. \ref{alg:numerical_bound}).
\newcommand{\ssbound}[1]{\paren{\sum_{i=1}^{d+1} \binom{2#1}{i}} 2^{\frac{-p #1}{2}}}

\begin{algorithm}[H]                \caption{\textsc{NumericalSampleBound}}\label{alg:numerical_bound}
    \begin{algorithmic}[1]
        \Require $1 > \gamma > 0$, $p=\PP(B_\alpha^d)$ volume of $d$-dim. $\alpha$-ball by uniform measure on c-space $X$, $\tsc{SearchExp}(n)=\ssbound{n}$ the tighter exp. from Eq. \ref{eq:numer_search_exp}.
        \State $N_l \gets 1$, $N_u \gets 1$
        \While{$\neg(\tsc{SearchExp}(N_u + 1) < \tsc{SearchExp}(N_u) < \gamma)$}
            \State $N_u \gets 2N_u$ 
            \Comment{Find interval that contains desired sample count.}
        \EndWhile
        \While{$N_l + 1 < N_u$}
            \State $N_{mid} \gets \floor{(N_l + N_u) / 2}$
            \If{$\tsc{SearchExp}(N_{mid} + 1) < \tsc{SearchExp}(N_{mid}) < \gamma$}
                \State $N_u \gets N_{mid}$  \Comment{Binary search to find the tightest sample count.}
            \Else
                \State $N_l \gets N_{mid}$
            \EndIf
        \EndWhile\\
        \Return{$N_u$}
    \end{algorithmic}
\end{algorithm}
\begin{figure}[h]
    \begin{subfigure}{0.55\textwidth}
        \centering
        \includegraphics[width=1.00\textwidth]{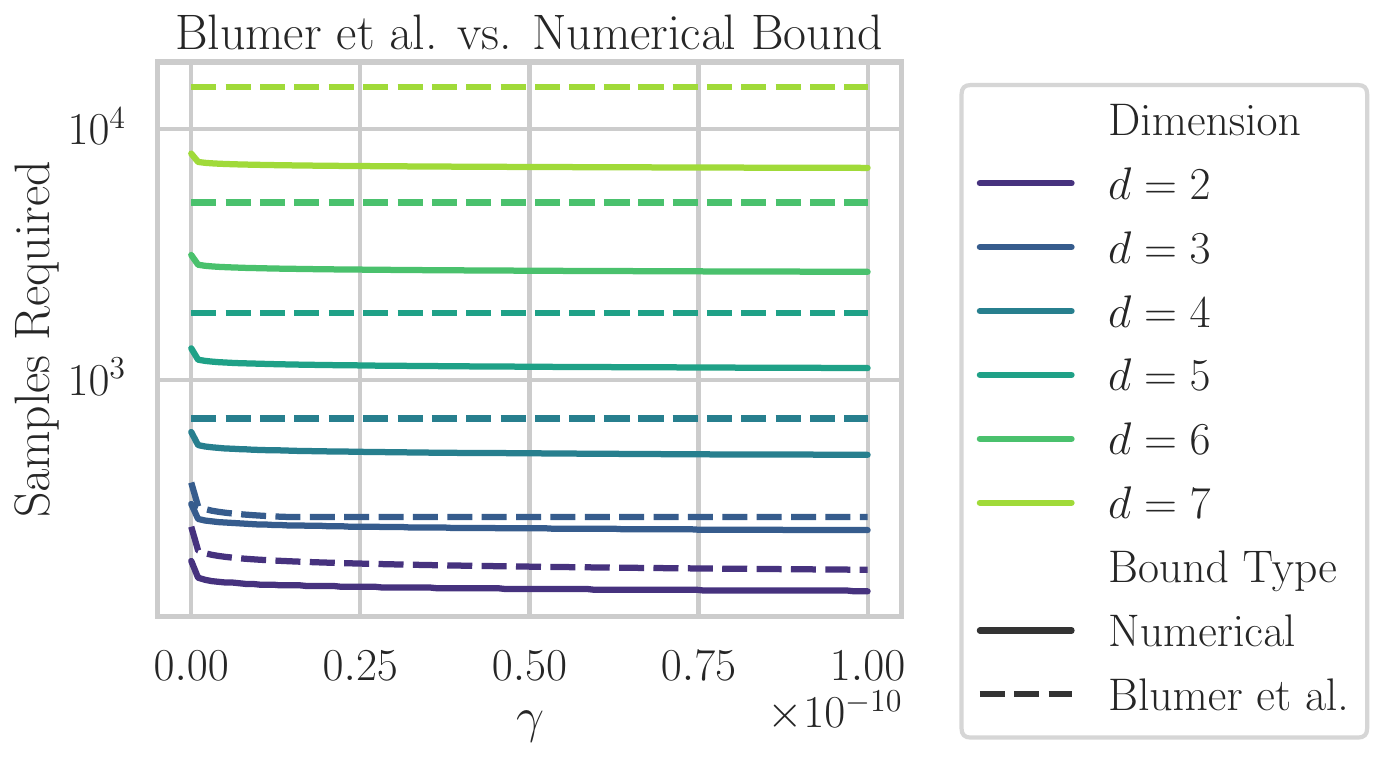}
    \end{subfigure}
    \begin{subfigure}{0.45\textwidth}
        \centering
        \includegraphics[width=1.00\textwidth]{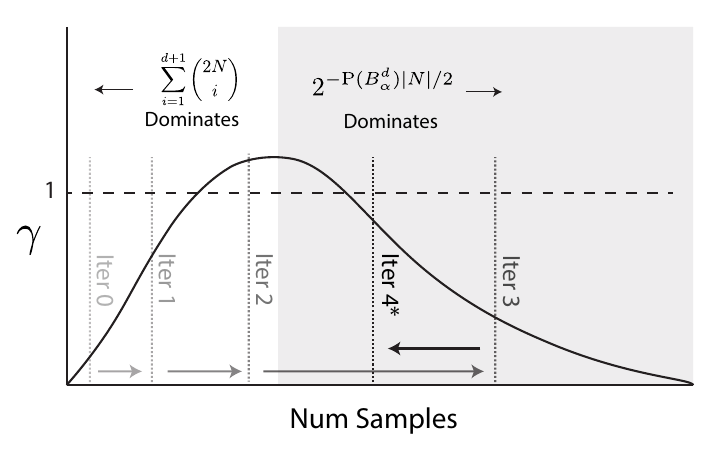}
    \end{subfigure}
    
    \caption{
        \textbf{Left:} A comparison of the sufficient number of samples required to find an $0.5$-net in a unit cube with probability $1 - \gamma$ as computed by Lemma \ref{lem:blumer_e_net_bound} and numerical computation done by Algorithm \ref{alg:numerical_bound}. 
        The right-hand side in the maximization in Lemma \ref{lem:blumer_e_net_bound} dominates for nearly all $\gamma$ plotted, since the dotted lines are nearly horizontal. 
        \textbf{Right:} A depiction of the search conducted by the numerical algorithm. 
    }
    \label{fig:numerical_alg_schematic}
\end{figure}
This algorithm exploits the fact that probability bound in Eq. \ref{eq:numer_search_exp} exhibits a monotonically increasing and then decreasing behavior as a function of the number of samples.
The algorithm uses a doubling scheme to first find a conservative upper bound on the sufficient number of samples on the decreasing side of the global maximum. 
A binary search narrows down the correct sample count (also on the decreasing side of the global maximum, Fig. \ref{fig:numerical_alg_schematic}).
A proof of the algorithm's correctness can be found in the supplemental material.
In practice, we find that using our method results in smaller estimates of sufficient sample counts than using the number of samples predicted directly by Lemma \ref{lem:blumer_e_net_bound}, and allows us to express the scaling relationship of the number of samples with the failure probability $\gamma$ (Fig. \ref{fig:numerical_alg_schematic}).
All comparisons using Eq. \ref{eq:numer_search_exp} are done in log-space to improve the numerics of the procedure.
Our implementation used double-precision floats; arbitrary-precision number representations yielded meager improvements to the computed number of samples.

The runtime of the algorithm is output-sensitive. 
Suppose $n$ is the output number of samples. 
The doubling computation will grow $N_u$ until $N_u \geq n$.
Then, the binary search will search across the integers $[N_u]$.
On every iteration of the doubling and search phases of the algorithm, we sum $d$ binomial coefficient computations.
A binomial coefficient computation $\binom m k$ runs in $O(m)$.
Putting everything together, we can express the runtime as the summation
$O(2 \sum_{i=0}^{\ceil{\log_2 n}} d 2^{i}) = O(dn)$.

At this point, Alg. \ref{alg:numerical_bound} can be used to compute the number of samples in \textsc{ComputeNewSamples}. 
Experiments to verify the tightness of the bound can be found in Section \ref{sec:experiments}.

\section{An Attempt to Extend to KNN PRMs}\label{sec:consequence}

So far, we have focused on radius PRMs.
They are a natural place to start because the connection radius gives us a direct relationship to the clearance of paths we wish our PRMs to find.
In practice, radius PRMs can only be used with relatively few samples since the degrees of the vertices tend to grow very quickly as the number of samples increases \cite{karaman_sampling-based_2011}.
$K$-Nearest Neighbor (KNN) PRMs are often used as a space-efficient alternative. 
This algorithm constructs the roadmap graph by attempting to connect a vertex to its $K$-nearest neighbors in the sample set.

A key assumption in Lemma \ref{lem:coverings_find_passages} is that the connection radius is sufficiently large to connect the $\alpha$-net samples to form the roadmap.
It is tempting to quantify an `effective' connection radius of a KNN PRM, since then our argument to justify Theorem \ref{thm:rad_prm_bound} would follow exactly.
Ultimately, we find this approach leads to a vacuous bound.
However, we include our work here to serve as a good starting point for a finite sample analysis of KNN PRMs.

An effective connection radius of a KNN PRM is a quantity that can be measured \textit{after} sampling and constructing the KNN graph:

\begin{definition}[Connection Radius]
    Let $X_{free}$ be a configuration space, and let $N \subset X_{free}$ be a set of samples.  
    Suppose that $G = (V, E)$ is a $K$-nearest neighbor roadmap graph on $N$. 
    We say $r \in \R^+$ is a \textit{connection radius} of $G$ if $\forall u, v \in V$ if $\norm{u - v} \leq r$, then $u$ is at most the $K$th nearest neighbor of $v$, or $v$ is at most the $K$th nearest neighbor of $u$. 
\end{definition}

There are an infinite number of valid connection radii, and we prefer to find the largest radius we can.
It turns out that we can provide a sharp upper bound on all possible valid connection radii:

\begin{lemma}\label{lem:r_knn_is_well_posed}
    Let $X_{free}$ be a collision-free configuration space, and let $N \subset X_{free}$ be a set of samples. Suppose that $G = (V, E)$ is a $K$-nearest neighbor roadmap graph on $N$.
    Let $r > 0$. Then 
    \begin{equation}\label{eqn:knn_conn_rad_ub}
    r < \min_{v \in V} \norm{v - \nn_{k+1}(v)},
    \end{equation}
    if and only if $r$ is a connection radius of $G$,
    where $\nn_i(v)$ refers to the $i^{th}$ nearest neighbor of vertex $v$.
\end{lemma}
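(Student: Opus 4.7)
The plan is to prove the biconditional by handling each direction separately, with the forward direction being a short chain of inequalities and the backward direction going by contrapositive.

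For the $(\Rightarrow)$ direction, I would proceed directly. Assume $r < \min_{v \in V} \|v - \nn_{K+1}(v)\|$ and pick any distinct $u, v \in V$ with $\|u - v\| \leq r$. Chaining the two inequalities gives $\|u - v\| < \|v - \nn_{K+1}(v)\|$. Since $\nn_{K+1}(v)$ is by definition the closest vertex to $v$ that is not among its $K$ nearest neighbors, any vertex strictly closer to $v$ than $\nn_{K+1}(v)$ must already lie in that top-$K$ set. Hence $u$ is among the $K$ nearest neighbors of $v$, so the first disjunct of the connection-radius condition holds and $r$ is a connection radius of $G$.

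For the $(\Leftarrow)$ direction, I would argue by contrapositive: assume $r \geq \min_{v \in V} \|v - \nn_{K+1}(v)\|$ and exhibit a pair that witnesses $r$ not being a connection radius. Let $v^{*}$ attain the minimum and set $u^{*} = \nn_{K+1}(v^{*})$. Then $\|u^{*} - v^{*}\| = \|v^{*} - \nn_{K+1}(v^{*})\| \leq r$, and by construction $u^{*}$ is not among the top $K$ nearest neighbors of $v^{*}$. To finish, I would need to also exclude $v^{*}$ from the top $K$ nearest neighbors of $u^{*}$. The natural lever is the minimality of $v^{*}$, which forces $\|u^{*} - \nn_{K+1}(u^{*})\| \geq \|v^{*} - \nn_{K+1}(v^{*})\| = \|u^{*} - v^{*}\|$, i.e.\ $v^{*}$ is no farther from $u^{*}$ than $u^{*}$'s own $(K+1)$st nearest neighbor.

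The step I expect to be the main obstacle is precisely converting this inequality into the \emph{strict} exclusion of $v^{*}$ from the top $K$ neighbors of $u^{*}$; the bound we obtain is consistent with $v^{*}$ being a top-$K$ neighbor of $u^{*}$, in which case the second disjunct of the connection-radius condition would be satisfied and no contradiction would arise. I would resolve this either by choosing the minimizer $v^{*}$ together with an appropriate tie-breaking rule so that $v^{*}$ can be placed as $u^{*}$'s $(K+1)$st (not $K$th) nearest neighbor, or by leaning on a general-position assumption on $N$ that rules out the degenerate mutual-$(K+1)$ configuration. Everything else is bookkeeping around the definition of $\nn_{K+1}$.
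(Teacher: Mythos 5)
Your forward direction is correct and is essentially the paper's argument (the paper phrases it as a proof by contradiction, you argue it directly): from $\norm{u-v}\le r < \norm{v - \nn_{K+1}(v)}$ one concludes $u$ must be among the top-$K$ neighbors of $v$, so the first disjunct of the connection-radius definition holds.

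The obstacle you flag in the backward direction is not a technicality you merely failed to discharge --- it is a genuine hole, and the paper's own proof steps over it silently. The paper's entire argument for that direction is to take the minimizer $v^*$ and assert that $(v^*, \nn_{K+1}(v^*))$ is a non-edge; but since both the roadmap and the connection-radius definition are disjunctive, one must also rule out $v^*$ lying in the top-$K$ of $u^* := \nn_{K+1}(v^*)$, which is exactly the point you raise. This cannot be ruled out, and in fact the stated equivalence is false. Take $K=1$ and three collinear points $a=0$, $b=1$, $c=3$: then $\norm{a-\nn_{2}(a)}=3$, $\norm{b-\nn_{2}(b)}=2$, $\norm{c-\nn_{2}(c)}=3$, so the minimum in Eq.~\ref{eqn:knn_conn_rad_ub} is $2$, attained at $v^*=b$ with $u^*=c$; yet $b$ is the \emph{first} nearest neighbor of $c$, so every pair at distance at most $2.5$ satisfies the disjunction and $r=2.5$ is a connection radius even though $2.5 \ge 2$. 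Note that the pairwise distances $1,2,3$ are all distinct, so neither of your proposed repairs (a tie-breaking rule or a general-position assumption) rescues the claim --- the offending configuration is not degenerate. The honest fix is to weaken the lemma to the forward implication only, i.e.\ the strict inequality is a \emph{sufficient} but not necessary condition for $r$ to be a connection radius; that one-directional statement is all that is actually used downstream in Proposition~\ref{prop:knn_rad_decay} and Corollary~\ref{cor:knn_rad_decay}.
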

We observe that Eq. \ref{eqn:knn_conn_rad_ub} is a conservative approximation for the true connectivity of the KNN PRM (Fig. \ref{fig:knn_bound_hypo_ratio}a), and thus, so is our conception of the effective connection radius.


The upper bound in Eq. \ref{eqn:knn_conn_rad_ub}, as a function of a random geometric graph of degree $k$, is a random variable, so we must understand its behavior before we treat it as a connection radius.
Fortunately, as the number of samples increases, the realizations of the upper bound tend to concentrate around similar values.
We exploit this fact to prove a lower bound to the expression in Eq. \ref{eqn:knn_conn_rad_ub} to study how the connection radii of a KNN PRM shrinks as the sample count increases.

\begin{proposition}\label{prop:knn_rad_decay}
Let $V$ be a set of $n$ i.i.d. uniform samples from collision-free configuration space $X_{free}$ used to build a $K$-nearest neighbor PRM. Then with probability at least $1 - \gamma$: 

\begin{equation}\label{eqn:knn_conn_rad_lb}
    \squareb{
        \frac{K - \sqrt{2K(\log n - \log \gamma)}}{(n-1)\PP(B_1^d)}
    }^{1/d}
    <
    \min_{v \in V} \norm{v - \nn_{k+1}(v)},
\end{equation}

\end{proposition}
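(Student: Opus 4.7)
The overall strategy is a union bound over the $n$ samples in $V$, with the per-sample tail handled by a Chernoff bound on a binomial. The first step is to reformulate the event: for any vertex $v \in V$, the distance $\|v - \nn_{K+1}(v)\|$ is at most $r$ if and only if the closed ball $B_r(v)$ contains at least $K+1$ other samples from $V$, since ``$(K+1)$-th nearest within distance $r$'' is the same as ``at least $K+1$ neighbors within distance $r$.''

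Fixing an index $i$ and conditioning on the value of $v_i$, the remaining $n-1$ samples are i.i.d. uniform on $X_{free}$, so the number falling in $B_r(v_i) \cap X_{free}$ is Binomial with parameters $n-1$ and $p_{v_i} \le \PP(B_1^d)\, r^d$; the inequality may be strict when the ball protrudes outside $X_{free}$, which only helps us. Stochastic dominance then gives a bound $\Pr[\|v_i - \nn_{K+1}(v_i)\| \le r] \le \Pr[Y \ge K+1]$ that is uniform in $i$, where $Y \sim \mathrm{Bin}(n-1, \PP(B_1^d)\, r^d)$ has mean $\mu := (n-1)\PP(B_1^d)\, r^d$. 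I would then apply the multiplicative Chernoff bound in the ``sub-Gaussian'' form $\Pr[Y \ge a] \le \exp\paren{-(a-\mu)^2/(2a)}$, valid for $a \ge \mu$ and obtained from the sharper $\exp(-(a-\mu)^2/(\mu+a))$ by the elementary inequality $\mu + a \le 2a$. Taking $a = K$ and using the trivial inclusion $\Pr[Y \ge K+1] \le \Pr[Y \ge K]$, the problem reduces to choosing $r$ so that $(K-\mu)^2 \ge 2K(\log n - \log \gamma)$; solving the boundary equation $\mu = K - \sqrt{2K(\log n - \log \gamma)}$ for $r$ produces exactly the expression in the statement, and substituting back gives $\Pr[Y \ge K] \le \gamma / n$. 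A union bound over the $n$ vertices of $V$ then yields $\Pr\squareb{\min_v \|v - \nn_{K+1}(v)\| \le r} \le \gamma$, which is the desired conclusion.

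I expect the main obstacle to be pinning down exactly which Chernoff-type bound to invoke. Hoeffding's inequality would insert a factor of $\sqrt{n-1}$ rather than $\sqrt{K}$, and Bernstein's inequality leaves $\mu$ on both sides of the resulting scalar inequality, so neither inverts cleanly to the stated form; the ``$(a-\mu)^2/(2a)$'' relaxation of multiplicative Chernoff is what lines up with the $\sqrt{2K(\log n - \log \gamma)}$ factor. A minor subtlety worth flagging explicitly is that $v$ is itself one of the random samples, so the argument must first condition on $v_i$, verify that the stochastic-dominance bound $p_{v_i} \le \PP(B_1^d)\, r^d$ holds pointwise in $v_i \in X_{free}$, and then integrate $v_i$ out so that no property of its distribution enters the final estimate.
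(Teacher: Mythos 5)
Your proposal is correct and follows essentially the same route as the paper's proof: reformulate the nearest-neighbor distance event as a binomial count of samples landing in a radius-$r$ ball, bound the ball's probability mass by $\PP(B_1^d)r^d$, apply a Chernoff-type tail bound of the form $\exp\paren{-(K-\mu)^2/(2K)}$ (the paper obtains this via the relative-entropy Chernoff bound together with $D_{KL}(p_1\|p_2)\geq(p_1-p_2)^2/(2p_1)$, which is the same estimate you reach through the multiplicative form), and finish with a union bound over the $n$ vertices. Your explicit handling of the conditioning on $v_i$ and the stochastic-dominance step is, if anything, slightly more careful than the paper's presentation.
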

We now use the lower bound stated in Eq. \ref{eqn:knn_conn_rad_lb} to derive a conservative approximation of the effective connection radius using Lemma \ref{lem:r_knn_is_well_posed}:

\begin{corollary}\label{cor:knn_rad_decay}
  Let $V$ be a set of $n$ i.i.d. uniform samples from the collision-free configuration space $X_{free}$ used to build a $K$-nearest neighbor PRM.  
  Let $r \in \R^+$. 
  Then $r$ is a connection radius with probability at least $1 - \gamma$ if
   $$
   r \leq
    \squareb{
        \frac{K - \sqrt{2K\log (n / \gamma)}}{(n-1)\PP(B_1^d)}
    }^{1/d}.
   $$
\end{corollary}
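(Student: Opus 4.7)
The plan is to obtain Corollary \ref{cor:knn_rad_decay} as an essentially immediate consequence of chaining Proposition \ref{prop:knn_rad_decay} with the characterization of connection radii in Lemma \ref{lem:r_knn_is_well_posed}. Since both ingredients are already in hand, there is no genuinely new technical content; the proof is really a bookkeeping exercise, with the only minor points being (i) rewriting $\log n - \log \gamma$ as $\log(n/\gamma)$ so the two statements are in matching notation, and (ii) converting a strict inequality on the $(k+1)$-st nearest neighbor distance into the non-strict inequality appearing in the corollary.

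The first step is to fix an arbitrary $r \in \R^+$ satisfying the stated hypothesis
$$r \leq \squareb{\frac{K - \sqrt{2K\log(n/\gamma)}}{(n-1)\PP(B_1^d)}}^{1/d}.$$
Next I would invoke Proposition \ref{prop:knn_rad_decay} directly: with probability at least $1-\gamma$, this same quantity is strictly dominated by $\min_{v \in V}\norm{v - \nn_{k+1}(v)}$. Combining the two inequalities yields $r < \min_{v \in V}\norm{v - \nn_{k+1}(v)}$ on an event of probability at least $1-\gamma$. Finally, I would apply the ``if'' direction of Lemma \ref{lem:r_knn_is_well_posed} to conclude that $r$ is a connection radius of the KNN graph $G$ on this event, which is exactly the claim.

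The only thing to watch is that Proposition \ref{prop:knn_rad_decay} gives a strict inequality, while the corollary's hypothesis on $r$ is non-strict; passing from ``$r \leq$ bound'' and ``bound $< \min_v \norm{v-\nn_{k+1}(v)}$'' to ``$r < \min_v \norm{v-\nn_{k+1}(v)}$'' is immediate and preserves the strictness required by Lemma \ref{lem:r_knn_is_well_posed}. I do not anticipate any real obstacle: the probabilistic event on which the conclusion holds is inherited verbatim from Proposition \ref{prop:knn_rad_decay}, and Lemma \ref{lem:r_knn_is_well_posed} is a deterministic, sample-by-sample statement, so no additional union bound or measurability argument is needed.
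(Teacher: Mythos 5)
Your proposal is correct and matches the paper's intended derivation exactly: the corollary is obtained by chaining Proposition \ref{prop:knn_rad_decay} (which, on an event of probability at least $1-\gamma$, places the stated quantity strictly below $\min_{v\in V}\norm{v-\nn_{k+1}(v)}$) with the deterministic characterization in Lemma \ref{lem:r_knn_is_well_posed}, after rewriting $\log n - \log\gamma$ as $\log(n/\gamma)$. Your handling of the strict-versus-non-strict inequality is also the right (and only) point of care here.
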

To help parse the bound, we observe the following about the decay rate of the effective connection radius as a function of the failure probability $\gamma$ and number of samples:

\begin{itemize}
    \item As $\gamma$ increases, the effective connection radius grows by $O\paren{\frac{K \log (1 /\gamma)}{\PP(B_1^d)}}^{1/2d}$.
    \item As $n$ increases, the effective connection radius decays by $o\paren{\frac K {n\cdot \PP(B_1^d)}}^{1/d}$.
\end{itemize}
We suspect that the conservative $\sqrt{\log n}$ factor (which makes the bound vacuous as $n \to \infty$ and the reason for the small $o$ above) can be improved with a tighter analysis.
We can even define a similar numerical algorithm to Alg. \ref{alg:numerical_bound} that can search for a tighter \textit{radius} as a function of the number of neighbors, samples, and failure probability. 
For small $n$ (less than 1,000,000), we empirically find that a radius estimated by the numerical algorithm is reasonably tight (numerical experimental results are included in the supplemental material).

Corollary \ref{cor:knn_rad_decay} tells us enough about the effective connection radius for us to apply Lemma \ref{lem:coverings_find_passages}.
It is tempting to combine them with a union bound to make an analogous statement to Theorem \ref{thm:rad_prm_bound} for KNN PRMs:
\begin{theorem}[Vacuous KNN PRM Guarantee]\label{thm:knn_prm_bound}
    Suppose $N \subset X_{free}$ be a set of $n$ i.i.d. uniform random samples, from which we build a $K$-nearest neighbor roadmap. If: 
    \begin{equation}\label{eqn:knn_sample_hypo}
    n \geq \max \curlyb{
        \frac 4 {\PP(B^d_\alpha)} \log_2 \frac 4 \gamma,
        \frac {8d}{\PP(B^d_\alpha)}  \log_2 \frac {13}{\PP(B^d_\alpha)}
    },
    \end{equation}
    \noindent where
    \begin{equation}\label{eqn:knn_rad_hypo}
    \alpha \leq
    \frac 1 4 \cdot
    \squareb{
         \frac{K - \sqrt{2K\log (2n /\gamma)}}{(n-1)\PP(B_1^d)}
    }^{1/d},
    \end{equation}
    
    \noindent then for all $x_s, x_g \in X_{free}$, if there exists a solution path between $x_s, x_g$ that has clearance $2\alpha$, then the PRM will return a feasible path from $x_s$ to $x_g$ with probability $1 - \gamma$.
\end{theorem}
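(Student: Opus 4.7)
The plan is to combine three ingredients by a union bound: Lemma \ref{lem:blumer_e_net_bound} to guarantee that $N$ is an $\alpha$-net of $X_{free}$, Corollary \ref{cor:knn_rad_decay} to guarantee that $4\alpha$ is a valid connection radius of the KNN graph, and Lemma \ref{lem:coverings_find_passages} to convert these two facts into the desired path-finding guarantee. I would allocate half of the failure budget $\gamma$ to each of the first two (random) events.

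First, invoke Lemma \ref{lem:blumer_e_net_bound} with failure probability $\gamma/2$ in place of $\gamma$; its sample condition becomes exactly Eq. \ref{eqn:knn_sample_hypo}, since $\log_2(2/(\gamma/2)) = \log_2(4/\gamma)$. Thus $N$ fails to be an $\alpha$-net on an event of probability at most $\gamma/2$. Second, invoke Corollary \ref{cor:knn_rad_decay} with failure probability $\gamma/2$; substituting $\gamma/2$ into its expression produces the $\log(2n/\gamma)$ appearing in Eq. \ref{eqn:knn_rad_hypo}, which is exactly the condition that $4\alpha$ lies below the threshold for being a connection radius. Hence $4\alpha$ fails to be a connection radius of the KNN PRM on a further event of probability at most $\gamma/2$. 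A union bound yields that both good events hold simultaneously with probability at least $1 - \gamma$.

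On the intersection of these good events I would argue that the KNN roadmap is a supergraph of the radius-$4\alpha$ roadmap on the same samples: by the definition of connection radius, whenever $\norm{u-v} \leq 4\alpha$ at least one of $u, v$ is among the other's $K$ nearest neighbors, so every edge that the radius-$4\alpha$ construction would place is already present in the KNN construction. Given any $x_s, x_g \in X_{free}$ joined by a $2\alpha$-clear path, Lemma \ref{lem:coverings_find_passages} produces a feasible path in the radius-$4\alpha$ PRM, and the identical sequence of collision-free edges is inherited by the KNN PRM, completing the argument.

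The main obstacle is not any individual step but the joint consistency of the hypotheses — the reason the authors title the theorem vacuous. Eq. \ref{eqn:knn_sample_hypo} forces $n$ to grow like $1/\PP(B_\alpha^d) \sim 1/\alpha^d$, while Eq. \ref{eqn:knn_rad_hypo} forces $\alpha^d$ to scale at best like $K \log(2n/\gamma)/n$; for any fixed $K$ the numerator inside Eq. \ref{eqn:knn_rad_hypo} eventually becomes negative, so no $n$ satisfies both conditions at once. I would not attempt to patch this inside the proof itself. The route to a non-vacuous statement is instead to sharpen Proposition \ref{prop:knn_rad_decay}, removing the $\sqrt{\log n}$ factor whose looseness is exactly what forces the two hypotheses into conflict, or to replace the uniform-connection-radius argument with a direct connectivity analysis of the KNN graph through narrow passages.
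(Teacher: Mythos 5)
Your proposal is correct and follows exactly the route the paper intends: a union bound splitting $\gamma$ evenly between the $\alpha$-net event of Lemma \ref{lem:blumer_e_net_bound} (yielding the $\log_2(4/\gamma)$ in Eq. \ref{eqn:knn_sample_hypo}) and the connection-radius event of Corollary \ref{cor:knn_rad_decay} (yielding the $\log(2n/\gamma)$ in Eq. \ref{eqn:knn_rad_hypo}), followed by Lemma \ref{lem:coverings_find_passages} applied to the KNN graph viewed as a supergraph of the radius-$4\alpha$ roadmap. Your diagnosis of why the hypotheses are jointly unsatisfiable also matches the paper's own discussion of the theorem's vacuousness.
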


\begin{figure}[ht]
    \centering
    \begin{subfigure}[b]{0.43\textwidth}
        \includegraphics[width=0.95\textwidth]{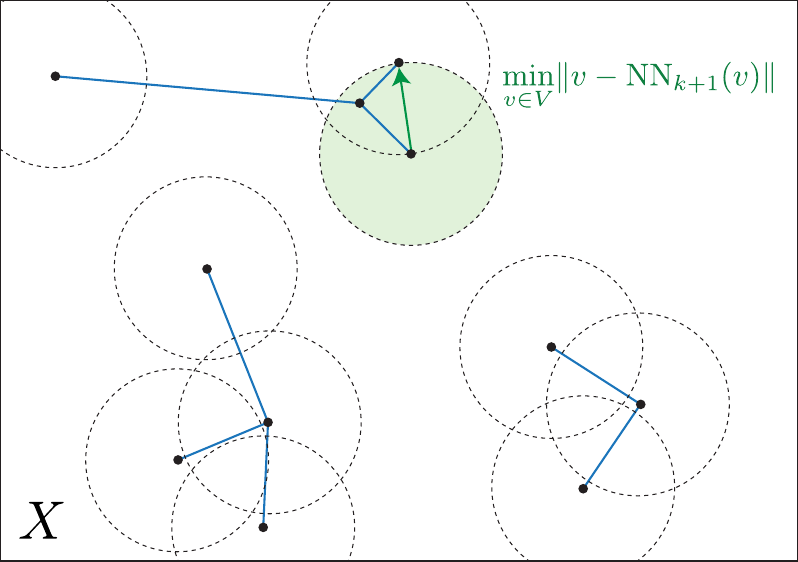}
    \end{subfigure}
    \begin{subfigure}[b]{0.561\textwidth}
        \includegraphics[width=0.90\textwidth]{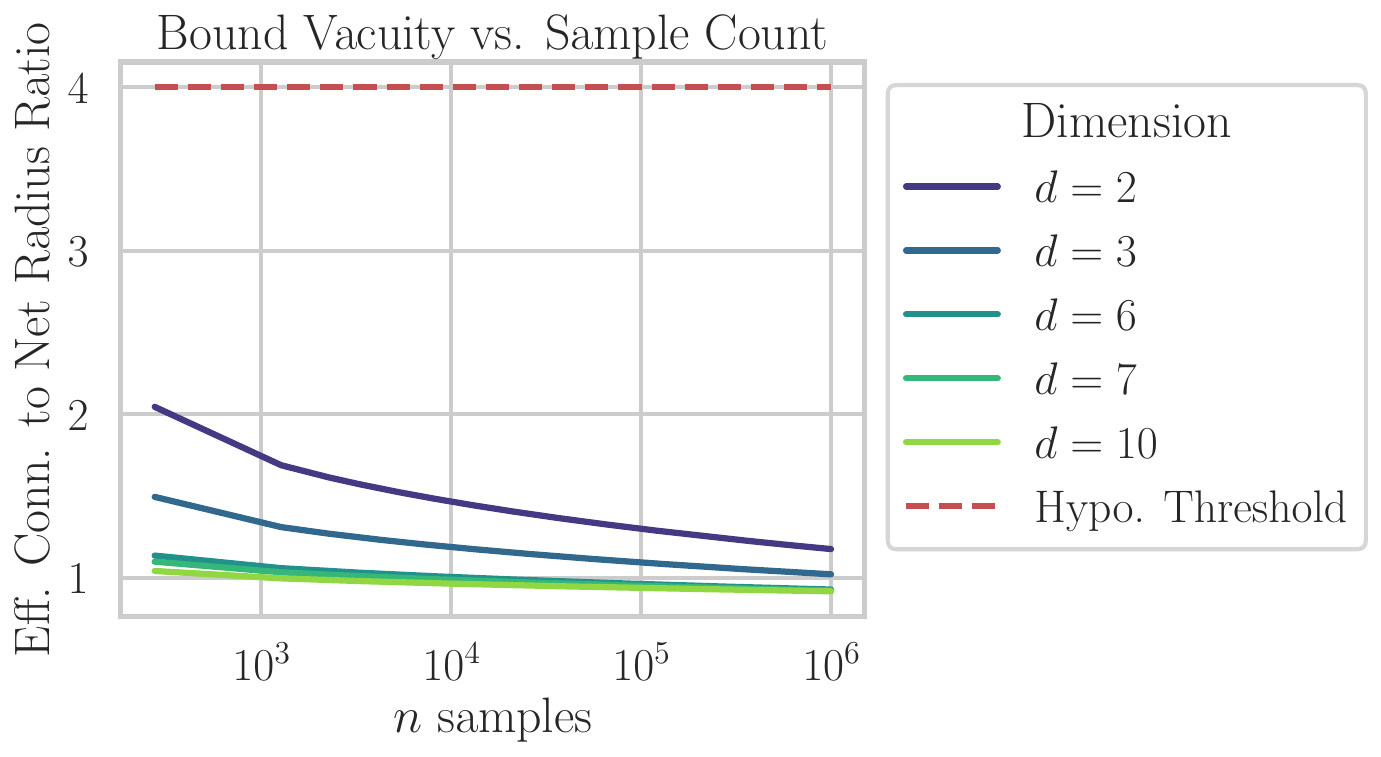}
    \end{subfigure}

    \caption{
        \textbf{Left:} A depiction of the effective connection radius for a PRM graph with $K=1$. Note that we have a conservative approximation that fails to capture `long distance edges' between isolated vertices. \textbf{Right:}
        A plot of the ratio of the connection radius to the net radius as a function of the number of sampled vertices with a fixed failure probability of $\gamma = 0.1$ and $K=256$, as determined by their \textit{numerical procedures}.
        The conditions for Theorem \ref{thm:rad_prm_bound} are satisfied for all sample counts with ratios above 4.0 (dotted red line).
        All curves are below the line, so the conditions of Theorem \ref{thm:knn_prm_bound} are not satisfied for any realistic $n$.
    }
    \label{fig:knn_bound_hypo_ratio}
\end{figure}

\noindent
Equation \ref{eqn:knn_sample_hypo} expresses the same sample condition from Lemma \ref{lem:blumer_e_net_bound}, and Equation \ref{eqn:knn_rad_hypo} fulfills the connection radius condition stated in Lemma \ref{lem:coverings_find_passages}.
Unfortunately, as shown by Fig. \ref{fig:knn_bound_hypo_ratio}, these two conditions can never be satisfied at the same time.
The effective connection radius decays at a rate faster than $o(1/n^{1/d})$, while the largest net radius to be found after $n$ samples is $\Omega(1/n^{1/d})$.
Lemma \ref{lem:coverings_find_passages} stipulates that these two quantities must form a net-radius to the connection-radius ratio of 1:4, which is not true. 
A less conservative estimate of the ratio using the corresponding numerical algorithms for each radius does not meet this requirement either (Fig. \ref{fig:knn_bound_hypo_ratio}).
We suspect the vacuousness of the bound comes from two issues: looseness in the analysis of the effective radius and looseness in the union bound combining Prop. \ref{prop:knn_rad_decay} and Theorem \ref{thm:rad_prm_bound}. 
Improvement in both steps of the argument may result in a non-vacuous sample bound for KNN PRMs.



\section{Experiments}\label{sec:experiments}
We seek to experimentally verify the tightness of Algorithm \ref{alg:numerical_bound} to estimate the number of samples sufficient to find a path. 
In practice, the KNN connection approach is better suited to higher-dimensional problems with many samples.
Since the conditions on Theorem \ref{thm:knn_prm_bound} prevent us from formally bounding the number of samples required by a KNN PRM, we explore using the radius numerical algorithm as a possible sampling \textit{heuristic} to guide the number of samples chosen to build KNN PRMs.

\subsection{Bound Tightness in a Narrow Passageway Problem}
Our clearance experiments are conducted in a `narrow hallway' problem that can be generalized to arbitrary dimensions.
We take the configuration space to be a union of boxes: 
\begin{align*}
&\underbrace{[-1.5, -0.5] \times [-0.5, 0.5]^{d-1}}_{\text{"left end"}} \cup \underbrace{[0.5, 1.5] \times [-0.5, 0.5]^{d-1}}_{\text{"right end"}}
\cup \underbrace{[-0.5, 0.5] \times [-\delta, \delta]^{d-1}}_{\text{hallway}}.
\end{align*}
The hallway is the narrow passage and its center path $[-0.5, 0.5] \times \{0\}^{d-1}$ has clearance $\delta.$
We construct many instances of this hallway problem for varying $\delta < 0.5$ and for $d = \{2, 3, 6, 7, 14, 20\}$. 
We form a Monte Carlo estimate of the probability of finding a path in the narrow passage by running 100 PRMs and checking the query $x_s=(-0.5, 0, \dots, 0), x_g=(0.5, 0, \dots, 0)$ for samples $n=100,1000$.
We then compare the number of samples predicted by the numerical bound (Alg. \ref{alg:numerical_bound}) with the estimated probability.
To explore the use of Alg. \ref{alg:numerical_bound} as a sampling heuristic for KNN PRMs, we compute the bound according to the clearance of the environment and set the probability of failure to $\gamma=0.01$.

\setlength{\tabcolsep}{3pt}
\begin{table}[h!]
\caption{A comparison of the empirical performance of the radius and KNN PRM ($K=32$) and the numerical bound for the hallway problem with $\delta=0.499, 0.25, 0.125$, $d=2,3,4,5,6$, and samples sets with $n=100,1000$. 
The left side of the slash is an MC estimate of the prob. of finding a solution. The right side is the number of samples as output by Alg. \ref{alg:numerical_bound}  using the estimated prob. (minus a machine epsilon to return finite samples) for radius PRMs or $\gamma=0.01$ for the KNN PRM heuristic. If all MC iterations for $n=100$ returned a path, the corresponding box for $n=1000$ is redundant and replaced with a hyphen.
}
\label{tab:mp-results}
\begin{tabular}{|c|c|c|c|c|c|}
    \toprule
    $\delta \downarrow$ & Dim ($d$) & $n=100$ (rad.) & $n=1000$ (rad.) & $n=100$ (KNN) & $n=1000$ (KNN) \\
    \midrule
    \multirow{6}{*}{0.499} & 2 & 1.00/1.42e+05 & -/- & 1.00/1.19e+03 & -/- \\
                           & 3 & 1.00/5.13e+06 & -/- & 1.00/5.20e+03 & -/- \\
                           & 4 & 1.00/2.00e+08 & -/- & 1.00/2.46e+04 & -/- \\
                           & 5 & 1.00/8.20e+09 & -/- & 1.00/1.24e+05 & -/- \\
                           & 6 & 1.00/3.53e+11 & -/- & 1.00/6.60e+05 & -/- \\
    \midrule
    \multirow{6}{*}{0.25}  & 2 & 1.00/1.17e+05 & -/- & 1.00/4.53e+03 & -/- \\
                           & 3 & 0.93/3.19e+06 & 1.00/3.79e+06 & 1.00/3.73e+04 & -/- \\
                           & 4 & 0.37/1.21e+08 & 1.00/1.39e+08 & 1.00/3.45e+05 & -/- \\
                           & 5 & 0.03/5.02e+09 & 1.00/5.57e+09 & 1.00/3.45e+06 & -/- \\
                           & 6 & 0.00/2.19e+11 & 0.83/2.21e+11 & 0.95/3.67e+07 & 0.96/3.67e+07 \\
    \midrule
    \multirow{6}{*}{0.125} & 2 & 0.19/7.37e+04 & 1.00/1.05e+05 & 1.00/1.86e+04 & -/- \\
                           & 3 & 0.00/2.78e+06 & 0.95/2.93e+06 & 0.58/3.24e+05 & 0.99/3.24e+05 \\
                           & 4 & 0.00/1.14e+08 & 0.00/1.14e+08 & 0.09/6.36e+06 & 0.60/6.36e+06 \\
                           & 5 & 0.00/4.87e+09 & 0.00/4.87e+09 & 0.02/1.33e+08 & 0.02/1.33e+08 \\
                           & 6 & 0.00/2.16e+11 & 0.00/2.16e+11 & 0.00/2.89e+09 & 0.00/2.89e+09 \\
    \midrule
    \multirow{6}{*}{0.0625}& 2 & 0.00/6.87e+04 & 0.95/7.53e+04 & 0.86/7.88e+04 & 1.00/7.88e+04 \\
                           & 3 & 0.00/2.71e+06 & 0.00/2.71e+06 & 0.03/2.93e+06 & 0.53/2.93e+06 \\
                           & 4 & 0.00/1.13e+08 & 0.00/1.13e+08 & 0.00/1.19e+08 & 0.00/1.19e+08 \\
                           & 5 & 0.00/4.86e+09 & 0.00/4.86e+09 & 0.00/5.04e+09 & 0.00/5.04e+09 \\
                           & 6 & 0.00/2.15e+11 & 0.00/2.15e+11 & 0.00/2.21e+11 & 0.00/2.21e+11 \\
    \bottomrule
\end{tabular}
\end{table}

Our results can be found in Table \ref{tab:mp-results}.
For radius PRMs, the bound predicts two to three orders of magnitude more than needed for the planar hallway problem and becomes progressively looser as the number of dimensions increases.
This means other conditions besides the sufficient conditions outlined in Lemma \ref{lem:coverings_find_passages} can allow the PRM to find a path through the narrow passage.
In KNN PRMs, the radius bound (as a sampling heuristic) is useful for 1-2 orders of magnitude for low dimensions ($d=2,3$) and loses utility as the number of dimensions grows.
Interestingly, we find that Alg. \ref{alg:numerical_bound} is better suited as a sampling heuristic in lower dimensions for KNN PRMs than as a guarantee for radius PRMs.

We do not find the looseness of the bound surprising.
As the number of dimensions increases, so does the volume of the hallway `ends.'
The bound applies when the entire volume of the configuration space is covered.
Since the hallway ends do not need to be densely sampled to yield a path, the bound becomes increasingly loose as the number of dimensions increases.

\vspace{-0.20cm}
\subsection{TAMP Experiments}
Our experiments in Table \ref{tab:mp-results} suggest that the radius PRM bound may be a useful sampling heuristic for a KNN PRM, which is more tractable to compute in practice.
We now aim to assess the usefulness of the numerical algorithm in the context of a TAMP planner that must solve multiple motion planning subproblems in scenarios where the clearance of the narrow passage is \textit{known} (Sec. \ref{sec:problem} discusses the more general case when the path clearance is unknown).

\begin{figure}[h!]
    \centering
    \begin{subfigure}[b]{\textwidth} 
    \centering
        \includegraphics[width=\linewidth]{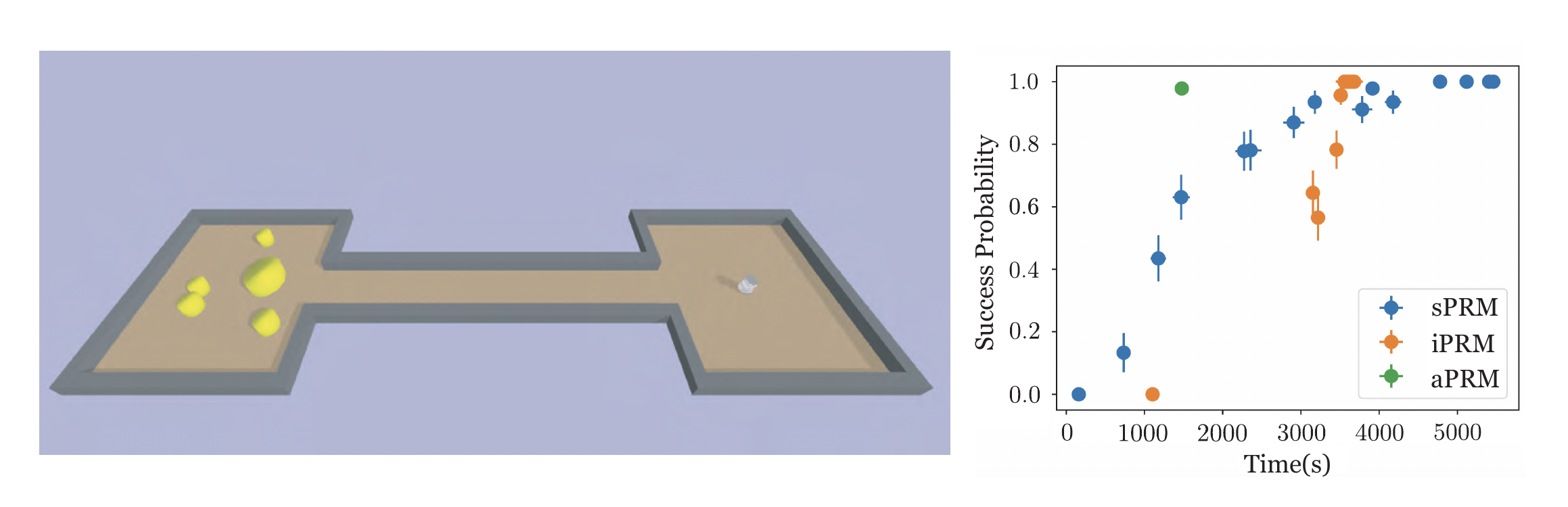}
        \caption{Our 2DoF mobile-base manipulation problem setting (left) and a scatter plot showing TAMP success vs planning time (right) for the three PRM-based motion planning methods. Each point on the scatter plot is the performance and planning time across 100 environment randomizations in which we vary the number and size of objects. The iPRM and sPRM points represent different maximum number of samples $N$.}
        \label{fig:tamp_env}
        \vspace{1em}
         
    \end{subfigure}
    \begin{subfigure}[b]{\textwidth} 
    \centering
        \includegraphics[width=\linewidth]{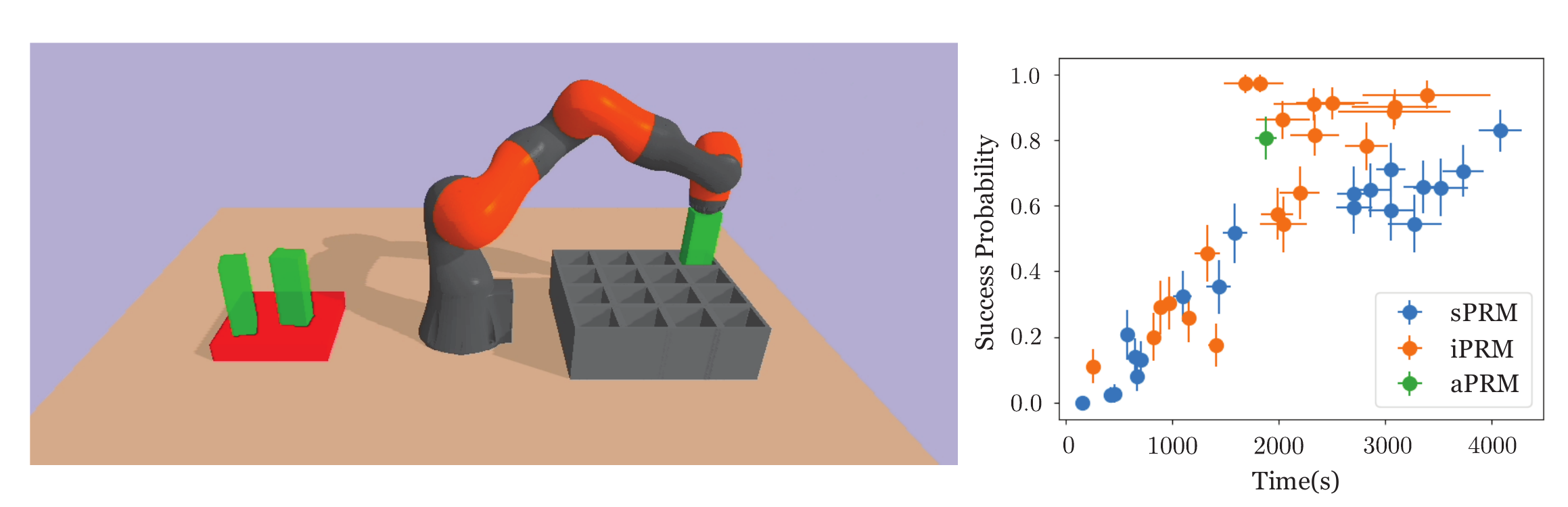}
        \caption{Our 6DoF manipulation problem setting (left) and corresponding baselines (right).}
        \label{fig:robot_env}
    \end{subfigure}
\end{figure}

We examine the bound's performance in a planar task involving a circular robot with 2DoF and a 6DoF peg-in-hole insertion task.
In the task shown in Figure~\ref{fig:tamp_env}, the robot must transfer all yellow objects (5-10) of varying sizes from one room to another by passing through a narrow passage. 
All objects are made to be small enough to fit through the hallway.
In Figure~\ref{fig:robot_env}, a 6DoF KUKA robot arm must move all objects (5-10) of varying size from an initial region into a packing container.
Each robot's planner must solve a sequence of motion planning subproblems of varying difficulty based on the size of the object the robot is holding. 
We use the Focused PDDLStream TAMP algorithm~\cite{garrett_pddlstream_2020}, which proposes plan skeletons consisting of high-level mode sequences (e.g., object grasps and releases) and calls a PRM to refine each candidate task skeleton.

Using an adaptive bound selection strategy, which we will call $\textsc{aPRM}$, we compute an upper bound for the number of samples the KNN PRM uses upon request from the TAMP solver. 
Given the size of the held object, we can compute $\delta$ by subtracting the hallway size from the extent of the robot when holding the object, which can be larger than the extent of the robot itself. 
We then call the KNN version of $\textsc{sPRM}$ outlined by Karaman and Frazzoli \cite{karaman_sampling-based_2011} (see supplementary material) with the number of samples determined by the numerical bound for radius PRMs (Alg. \ref{alg:numerical_bound}) with the computed passage width and probability of failure set to $\gamma = 0.1$. 
The connection radius required by Theorem \ref{thm:rad_prm_bound} is ignored. 

We compare this strategy to two other baseline strategies. 
First, we compare to the $\textsc{sPRM}$ motion planner with a fixed number of samples $N$ selected by hand and fixed across subproblems, which is standard practice in TAMP.
Second, we compare to an incremental version of $\textsc{sPRM}$, which we call $\textsc{iPRM}$. This algorithm iteratively calls $\textsc{sPRM}$ with an increasing max sample bound $K$, which increases geometrically according to $M_{i+1} = cM_{i}$ until a motion plan is found or $M\geq{N}$. Our experiments use $c=1.1$.
We compare these strategies with $\textsc{aPRM}$ in Fig.~\ref{fig:tamp_env} for various selections of $N$. 

In the planar problem, we observe that \textsc{aPRM} is able to solve nearly all problems in less wallclock time (including numerical bound evaluation) than \textsc{sPRM} and \textsc{iPRM}. 
We observed that the bound calculation time was negligible with respect to the PRM planning time.
The radius numerical bound helps the planner effectively trade sampling effort with the complexity of the motion planning problems, whereas \textsc{sPRM} either undersamples or oversamples problems with a single threshold, and \textsc{iPRM} wastes computation time building up to the correct number of samples.
In the 6DoF problem, our results are weaker, where some instances of \textsc{iPRM} scale up to a sufficient number of samples faster than \textsc{aPRM} takes in a single computation.
This is consistent with our observation that the numerical bound becomes less useful as the number of dimensions increases.

\section{Related Work}
Another approach to implicitly allocate samples to a sampling-based motion planner is to write an algorithm to decide the feasibility of the motion planning problem.
A sampling-based motion planner can then be terminated if the motion planning problem is found to be infeasible.
One common approach is to show the existence of a `disconnection proof,' which consists of a hypersurface in the obstacle region that fully encloses the goal from the starting configuration \cite{basch_disconnection_2001,li_sampling_2023,li_scaling_2023}.
Others use volumetric approximations of the configuration space or obstacle region \cite{varava_free_2021,mccarthy_proving_2012,zhang_efficient_2008} to refute feasibility (up to the accuracy of the approximation). 
These methods complement our approach to motion planning in TAMP.
While allocating samples according to a bound (as done in Alg. \ref{alg:numerical_bound}) may help a TAMP planner solve a feasible problem with an appropriate number of samples, the planner can still increase the sample count arbitrarily in an infeasible problem after an unbounded number of attempts.
Developing a suitable criterion to have the planner call an infeasibility checker before adding a large number of samples is an interesting direction for future work.


Much of our understanding of  PRMs with random sampling is from asymptotic results stated in the limit of infinite samples.
Karaman and Frazzoli \cite{karaman_sampling-based_2011} proved that paths returned by PRM (and the more space-frugal PRM*) are both asymptotically optimal.
Tighter estimates on the sufficient connection radius to ensure graph connectivity have been obtained as well \cite{solovey_critical_2020}.
All of these results are proven using arguments that are difficult to extend to finite samples.

In addition to Hsu et al.'s work on configuration space expansiveness \cite{hsu_finding_1998}, there are several instances of finite-random sample bounds for PRMs.
Kavraki et al. \cite{kavraki_analysis_1998}  and Dobson et al. \cite{dobson_geometric_2015} gave a bound on the number of samples required to return a feasible and near-optimal path, respectively, by assuming the clearance and path length.
The stronger assumption on path length allows for simpler arguments that do not rely on covering sets, but also preclude these bounds from application in scenarios where the path length is unknown.

Deterministic sampling schemes, if found, are more sample-efficient than random samples in covering the space under many definitions of density \cite{niederreiter_random_1992}.
Tsao et al. develop a fixed set that meets the dispersion density measure, the radius of the largest unsampled ball in the sample space.
Likewise, for our study, we wish to have an incremental sequence that causes the dispersion to decrease as samples are added.
A nonconstructive argument by Niederreiter \cite{niederreiter_random_1992} shows the existence of an optimally sample-efficient sequence for the dispersion of balls in the $\ell_\infty$-norm (hypercubes), which suboptimally addresses the dispersion in the $\ell_2$-norm related to our study.
To our knowledge, an algorithm that explicitly constructs this sequence has yet to be found.
Finding an optimal deterministic sequence that directly minimizes dispersion in the $\ell_2$-norm is related to the notoriously difficult \textit{sphere-packing problem} \cite{matousek_geometric_1999,conway_sphere_2013}.
Algorithmic constructions of sequences that address other density measures exist \cite{lindemann_incremental_2003,lindemann_incremental_2005} but exploit regular structure available in the other density measures that is unavailable for dispersion.
Empirically, these other sequences are more sample-efficient than random samples in low dimensions \cite{janson_deterministic_2018}, a benefit that becomes negligible in higher dimensions  \cite{hsu_probabilistic_2007}.
For these reasons, we choose to study random sample sequences.




Sample bounds for random coverings have been studied in discrete geometry and computational learning theory \cite{haussler_epsilon-nets_1986,matousek_lectures_2002,mustafa_sampling_2022}. 
The techniques to prove these bounds have been credited to Vapnik and Chervonenkis \cite{vapnik_uniform_1971}, Sauer \cite{sauer_density_1972}, and Shelah \cite{shelah_combinatorial_1972}, who sought to find small coverings by the probabilistic method (see supplementary material).
Blumer et al. build on these techniques to estimate the probability that the samples form a covering \cite{blumer_learnability_1989} to form probably approximately correct learning guarantees.
Their results for the \textit{smallest} $\alpha$-net (right-hand argument in Lemma \ref{lem:blumer_e_net_bound})  were confirmed to be tight up to constants by K\'omlos et al. \cite{komlos_almost_1992}.
This paper connects these results to motion planning.


Lastly, this paper shows the potential usefulness of a sampling bound in a search-then-sample style planner. Search-then-sample is a fairly common approach to 
TAMP~\cite{garrett_pddlstream_2020,sesame,curtis2024partially}. 
These planners use ad hoc sample thresholds for motion planning, which we show can be replaced with a sample bound.
In addition to setting resource limits on individual motion planning subproblems, a sample bound could guide the task-level search as a heuristic to decide which motion planning problems to attempt. 
Some related works along these lines include Cambon et al. \cite{cambon_hybrid_2009}, wherein the number of failed queries is a heuristic to decide which task plans to prioritize for motion computation.

\section{Conclusion}

We present a theoretical bound for the number of samples sufficient for a radius PRM to find a path of a specified clearance with high probability.
Our results are derived by connecting prior analysis of the radius PRM to results from sample complexity theory.
Since the results from sample complexity theory significantly overestimate the number of samples required to find a solution,  we introduce a novel algorithm that computes a tighter number of samples.
We also made progress in extending our theoretical results to the KNN PRM by studying the decay rate of the `effective connection radius' as the number of samples increases.

Our experiments show that despite being less conservative than the original theoretical bound, our numerical sample bound is still too loose for practical use. 
In settings with a single narrow passage using a radius PRM, we saw that the estimated number of samples by the numerical algorithm is tight within two to three orders of magnitude for planar problems.  
The estimate becomes increasingly loose and less useful as the number of dimensions grows.
When the numerical algorithm is applied as a heuristic to estimate the number of samples of the KNN PRM, we found the algorithm to be more useful for low dimension problems ($d=2,3$) than as a guarantee for radius PRMs.
We also applied the heuristic to guide sampling effort of a KNN PRM in a TAMP planner in problems where the narrow passage width was \textit{known}. 
Our results show that only in planar problems, the bound was a useful heuristic to help the planners scale the number of samples appropriately to the sample complexity of the motion planning problem.

In the future, we wish to tighten our theoretical and numerical bounds for more practical use. 
We wish to introduce a version bound that has a more realistic characterization of the worst-case planning problem.
At present, the `worst-case' problem is a configuration space that consists of a \textit{single} narrow passage with constant clearance along every dimension.
In many problems, the clearances vary in the configuration space, with narrow widths in some but not all dimensions.
Introducing additional parameters amenable to iterative search for a more realistic worst case may improve bound tightness for planning problems that appear in practice.
We can further explore Blumer et al.'s proof for improvements: our numerical algorithm only tightens  Blumer's bound over a single lossy step, and there are many more such steps that may lead to a significantly tighter bound.
We also plan to finish extending the sample bound to KNN PRMs to bridge the gap between our theory and experiments. 

\small{\subsubsection{Acknowledgements} We thank David Hsu, Steven LaValle, Sandeep Silwal, Justin Chen, Haike Xu, Josh Engels, Thomas Cohn, Christopher Bradley, Michael Noseworthy, Erick Fuentes, and Chad Kessens for their invaluable support.
This work was supported by the National Science Foundation Graduate Research Fellowship Program under Grant No. 2141064. Any opinions, findings, and conclusions or recommendations expressed in this material are those of the authors and do not necessarily reflect the views of the National Science Foundation.}

%
%
%

\newpage
\bibliographystyle{splncs04}
\bibliography{references}

\begin{thebibliography}{10}
\providecommand{\url}[1]{\texttt{#1}}
\providecommand{\urlprefix}{URL }
\providecommand{\doi}[1]{https://doi.org/#1}

\bibitem{basch_disconnection_2001}
Basch, J., Guibas, L.J., Hsu, D., Nguyen, A.T.: Disconnection proofs for motion planning. In: Proceedings of the IEEE International Conference on Robotics and Automation (2001)

\bibitem{blumer_learnability_1989}
Blumer, A., Ehrenfeucht, A., Haussler, D., Warmuth, M.K.: Learnability and the {Vapnik}-{Chervonenkis} dimension. J. ACM  \textbf{36}(4),  929--965 (Oct 1989)

\bibitem{boucheron_concentration_2013}
Boucheron, S., Lugosi, G., Massart, P.: {Concentration Inequalities: A Nonasymptotic Theory of Independence}. Oxford University Press (2013)

\bibitem{cambon_hybrid_2009}
Cambon, S., Alami, R., Gravot, F.: A hybrid approach to intricate motion, manipulation and task planning. The International Journal of Robotics Research  \textbf{28}(1),  104--126 (2009)

\bibitem{sesame}
Chitnis, R., Silver, T., Tenenbaum, J.B., Lozano-Perez, T., Kaelbling, L.P.: Learning neuro-symbolic relational transition models for bilevel planning. In: Proceedings of IEEE/RSJ International Conference on Intelligent Robots and Systems (2022)

\bibitem{conway_sphere_2013}
Conway, J.H., Sloane, N.J.A.: Sphere Packings, Lattices and Groups. Springer Science \& Business Media (2013)

\bibitem{curtis2024partially}
Curtis, A., Matheos, G., Gothoskar, N., Mansinghka, V., Tenenbaum, J.B., Lozano-Pérez, T., Kaelbling, L.P.: {Partially observable task and motion planning with uncertainty and risk awareness}. In: Proceedings of Robotics: Science and Systems (2024)

\bibitem{dobson_geometric_2015}
Dobson, A., Moustakides, G.V., Bekris, K.E.: Geometric probability results for bounding path quality in sampling-based roadmaps after finite computation. In: Proceedings of {IEEE} {International} {Conference} on {Robotics} and {Automation} (May 2015)

\bibitem{garrett_pddlstream_2020}
Garrett, C.R., Lozano-P{\'e}rez, T., Kaelbling, L.P.: Pddlstream: Integrating symbolic planners and blackbox samplers via optimistic adaptive planning. In: Proceedings of the International Conference on Automated Planning and Scheduling (2020)

\bibitem{hauser_integrating_2009}
Hauser, K., Latombe, J.C.: Integrating task and prm motion planning: Dealing with many infeasible motion planning queries. In: ICAPS09 Workshop on Bridging the Gap between Task and Motion Planning (2009)

\bibitem{haussler_epsilon-nets_1986}
Haussler, D., Welzl, E.: Epsilon-nets and simplex range queries. In: Proceedings of the Second Annual Symposium on {Computational} Geometry. pp. 61--71. {SCG} '86, Association for Computing Machinery (1986)

\bibitem{hsu_finding_1998}
Hsu, D., Kavraki, L.E., Latombe, J.C., Motwani, R., Sorkin, S., et~al.: On finding narrow passages with probabilistic roadmap planners. In: Proceedings of the Third Workshop on the Algorithmic Foundations of Robotics on Robotics: The Algorithmic Perspective (1998)

\bibitem{hsu1997path}
Hsu, D., Latombe, J.C., Motwani, R.: Path planning in expansive configuration spaces. In: Proceedings of IEEE International Conference on Robotics and Automation (1997)

\bibitem{hsu_probabilistic_2007}
Hsu, D., Latombe, J.C., Kurniawati, H.: On the probabilistic foundations of probabilistic roadmap planning. The International Journal of Robotics Research  \textbf{25}(7),  627--643 (2006)

\bibitem{janson_deterministic_2018}
Janson, L., Ichter, B., Pavone, M.: Deterministic sampling-based motion planning: {optimality}, complexity, and performance. The International Journal of Robotics Research  \textbf{37}(1),  46--61 (Jan 2018)

\bibitem{karaman_sampling-based_2011}
Karaman, S., Frazzoli, E.: Sampling-based algorithms for optimal motion planning. The International Journal of Robotics Research  \textbf{30}(7),  846--894 (Jun 2011)

\bibitem{kavraki_analysis_1998}
Kavraki, L., Kolountzakis, M., Latombe, J.C.: Analysis of probabilistic roadmaps for path planning. IEEE Transactions on Robotics and Automation  \textbf{14}(1),  166--171 (1998)

\bibitem{komlos_almost_1992}
Komlós, J., Pach, J., Woeginger, G.: Almost tight bounds for epsilon-{nets}. Discrete Comput Geom  \textbf{7}(2),  163--173 (Feb 1992)

\bibitem{li_sampling_2023}
Li, S., Dantam, N.T.: A sampling and learning framework to prove motion planning infeasibility. The International Journal of Robotics Research  \textbf{42}(10),  938--956 (2023)

\bibitem{li_scaling_2023}
Li, S., Dantam, N.T.: Scaling infeasibility proofs via concurrent, codimension-one, locally-updated coxeter triangulation. IEEE Robotics and Automation Letters  \textbf{8}(12),  8303--8310 (2023)

\bibitem{lindemann_incremental_2003}
Lindemann, S., LaValle, S.: Incremental low-discrepancy lattice methods for motion planning. In: Proceedings of the International Conference on Robotics and Automation (2003)

\bibitem{lindemann_incremental_2005}
Lindemann, S.R., Yershova, A., LaValle, S.M.: Incremental grid sampling strategies in robotics. In: Algorithmic Foundations of Robotics VI (2005)

\bibitem{matousek_geometric_1999}
Matoušek, J.: Geometric {Discrepancy}: An Illustrated Guide. Algorithms and {Combinatorics}, Springer (1999)

\bibitem{matousek_lectures_2002}
Matoušek, J.: Lectures on {Discrete} {Geometry}. Graduate {Texts} in {Mathematics}, Springer (2002)

\bibitem{mccarthy_proving_2012}
McCarthy, Z., Bretl, T., Hutchinson, S.: Proving path non-existence using sampling and alpha shapes. In: Proceedings of the IEEE International Conference on Robotics and Automation (2012)

\bibitem{mustafa_sampling_2022}
Mustafa, N.: Sampling in {Combinatorial} and {Geometric} {Set} {Systems}. Mathematical {Surveys} and {Monographs}, American Mathematical Society (2022)

\bibitem{niederreiter_random_1992}
Niederreiter, H.: Random Number Generation and Quasi-{Monte} {Carlo} Methods. Society for Industrial and Applied Mathematics (1992)

\bibitem{sauer_density_1972}
Sauer, N.: On the density of families of sets. Journal of Combinatorial Theory, Series A  \textbf{13}(1),  145--147 (1972)

\bibitem{shelah_combinatorial_1972}
Shelah, S.: A combinatorial problem; stability and order for models and theories in infinitary languages. Pacific Journal of Mathematics  \textbf{41}(1),  247--261 (1972)

\bibitem{solovey_critical_2020}
Solovey, K., Kleinbort, M.: The critical radius in sampling-based motion planning. The International Journal of Robotics Research  \textbf{39}(2-3),  266--285 (Mar 2020)

\bibitem{tsao_sample_2019}
Tsao, M., Solovey, K., Pavone, M.: Sample {complexity} of {probabilistic} {roadmaps} via $\epsilon$-nets, extended (2019), arXiv:1909.06363 [cs]

\bibitem{vapnik_uniform_1971}
Vapnik, V., Chervonenkis, A.Y.: On the uniform convergence of relative frequencies of events to their probabilities. Theory of Probability \& Its Applications  \textbf{16}(2),  264--280 (1971)

\bibitem{varava_free_2021}
Varava, A., Carvalho, J.F., Kragic, D., Pokorny, F.T.: Free space of rigid objects: Caging, path non-existence, and narrow passage detection. The International Journal of Robotics Research  \textbf{40}(10-11),  1049--1067 (2021)

\bibitem{vershynin_high_2018}
Vershynin, R.: High-dimensional probability: An introduction with applications in data science, vol.~47. Cambridge university press (2018)

\bibitem{zhang_efficient_2008}
Zhang, L., Kim, Y.J., Manocha, D.: Efficient cell labelling and path non-existence computation using c-obstacle query. The International Journal of Robotics Research  \textbf{27}(11-12),  1246--1257 (2008)

\end{thebibliography}

\pagebreak

\renewcommand{\thesection}{\Alph{section}}
\setcounter{section}{0}
\setcounter{page}{1}

\section*{Supplemental Proofs, Algorithms, and Experiments}
\label{sec:proofs}
\addtocounter{section}{1}
\subsection{PRM Pseudocode}
Karaman and Frazzoli's \cite{karaman_sampling-based_2011} implementation of the PRM algorithm uses two separate subroutines: a pre-processing subroutine that grows a graph to a specified number of samples (Alg. \ref{alg:sprm_construction}), and a query routine that connects the starts and goals to the graph and solves a shortest-path search (using Dijkstra's algorithm, etc.; Alg. \ref{alg:sprm_query}).
For both algorithms, we assume several subroutines have already been provided:
\begin{itemize}
    \item \tsc{SampleFree($\cdot$)}, which samples a collision-free point from the configuration space.
    \item \tsc{CollisionFree($\cdot$)}, which checks if a linear path between the query vertices is collision-free.
\end{itemize}

Standard implementations of the \tsc{Near} subroutine include a range search that identifies the vertices within a radius of the query vertex and a $K$-nearest neighbor search. 

\noindent
\begin{minipage}{0.51\textwidth}
\begin{algorithm}[H]
    \caption{\sc{sPrmConstruction}}\label{alg:sprm_construction}
    \begin{algorithmic}[1]
        \Require $n > 0$ number of new samples, $X$ c-space, $G=(V, E)$ prev. PRM graph
        \State $V$ $\gets V \cup \{\textsc{SampleFree}(X)\}_{i=|V|,\dots,n}$
        \For{$v \in V$} 
            \State $U \gets \textsc{Near}(v, V \backslash \{v\})$
            \For{$u \in U$}
                \If{$\textsc{CollisionFree}(v,u, X)$}
                    \State $E \gets E \cup \{(u,v)\}$
                \EndIf
            \EndFor
        \EndFor
    \State\Return $(V, E)$
    \end{algorithmic}
\end{algorithm}
\end{minipage}
\begin{minipage}{0.49\textwidth}
\begin{algorithm}[H]
    \caption{\sc{sPrmQuery}}\label{alg:sprm_query}
    \begin{algorithmic}[1]
        \Require $x_s$ start, $x_g$ goal, $X$ c-space, $G=(V, E)$ PRM graph
        \State $U_{s}, U_{g} \gets \tsc{Near}(s, V), \tsc{Near}(g, V)$
        \For{$u \in U_s$}
            \If{$\textsc{CollisionFree}(x_s,u, X)$}
                \State $E \gets E \cup \{(x_s,u)\}$
            \EndIf
        \EndFor
        \For{$u \in U_g$}
            \If{$\textsc{CollisionFree}(x_g,u, X)$}
                \State $E \gets E \cup \{(x_g,u)\}$
            \EndIf
        \EndFor
        \State\Return $\tsc{ShortestPath}(G, x_s, x_g)$
    \end{algorithmic}
\end{algorithm}
\end{minipage}

\subsection{Proof of Lemmas \ref{lem:coverings_find_passages} and \ref{lem:blumer_e_net_bound} in Section \ref{sec:rad_bound}}
We begin with our proof of Lemma \ref{lem:coverings_find_passages}, which proceeds along similar reasoning to Tsao et al. \cite{tsao_sample_2019} (though simpler since we do not discuss path optimality).

\begin{proof}[Lemma \ref{lem:coverings_find_passages}]
Let $x_s, x_g \in X_{free}$ be a start and goal pose, respectively. Suppose there exists a continuous $2\alpha$-clear path $\sigma:[0,1] \to X_{free}$ such that $\sigma(0) = x_s$ and $\sigma(1) = x_g$. 

Let $L$ denote the length of $\sigma$. Let $\{B_{2\alpha}^d(p_i)\}_{i \in [\ceil{L/{2\alpha}}]}$ denote a set of balls on $\sigma([0, 1])$, where $B_{2\alpha,1}^d(p_1)$ is centered on $x_s=p_1$ and $B_{2\alpha}^d(p_{\ceil{L/2\alpha}})$ is centered on $x_g=p_{\ceil{L/2\alpha}}$. The remaining balls (for $i = 2, \dots, \ceil{L/{2\alpha}} - 1$) are centered along $\sigma([0, 1])$, where the $p_i$ is spaced $2\alpha(i-1)$ away from $x_s$ along the arclength of $\sigma$.

Next, we construct another set of balls, $\{B^d_{\alpha}(q_j)\}_{j\in[\ceil{L/2\alpha}-1}]$, where $q_j$ is placed $\alpha + 2\alpha (j-1)$ along the arclength of $\sigma$ from $x_s=p_1$. 
We observe that $q_j$ is placed exactly between $p_j$ and $p_{j+1}$ along $\sigma$'s arclength.
Since these points are in a Euclidean space, straight lines represent shortest paths, so
$$
\norm{p_j - q_j}, \norm{p_{j+1} - q_j} \leq 2\alpha,
$$
for all $j \in [\ceil{L/2\alpha}-1]$.
Thus, by construction, we know that 
$$
B^d_{\alpha}(q_j) \subset B^d_{2\alpha}(p_j),B^d_{2\alpha}(p_{j+1}).
$$

Since $N$ forms an $\alpha$-net, we know that there must exist a point $x_j \in N \in B^d_{\alpha}(q_j)$ for all $j$ ($x_j$ must be at most $\alpha$ from ball center $q_j$).
Furthermore, we observe that the linear path between $x_j$ and $x_{j+1}$ is collision-free for all $j$, since $x_j, x_{j+1} \in B^d_{2\alpha}(p_{j+1})$ and balls are convex.

The maximum distance between $x_j, x_{j+1}$ is $4\alpha$ (opposite sides of the $2\alpha$-ball, and so setting the connection radius to $4\alpha$ ensures all $x_s, x_1, \dots, x_{\ceil{L/2\alpha}-1}, x_g$ will be connected to form a path to be returned by the radius PRM (the endpoints trivially included as the centers of the first and last $2\alpha$ balls).\hfill$\blacksquare$
\end{proof}

We now describe the our proof for Lemma \ref{lem:blumer_e_net_bound}. 
Our arguments are centered around covering point configurations and the probability they arise from random sampling processes, a topic well-studied at the intersection of statistics, combinatorics, and discrete geometry\footnote{We will be using nomenclature from both computational/combinatorial geometry and statistics to avoid conflicting terminology.}
\cite{blumer_learnability_1989,komlos_almost_1992,matousek_lectures_2002,mustafa_sampling_2022}.
Our intuition behind the definitions is grounded in trying to `hit' a class of subsets of some space $X_{free}$ with a random finite set of points.

\begin{definition}
   $Y$ be a (potentially infinite) set and let $\F \subset 2^Y$. We call the tuple  $(Y, \F)$ a range space.\footnote{For those who are familiar with learning theory: another name for $\F$ can be the \textit{hypothesis space} of a set of binary classifiers or set of indicators associated with all elements of $\F$.}
\end{definition}

We will take the base space to be $X_{free} \subset \R^n$, the obstacle-free configuration space of a robot.
$\F$ will be the set of $n$-dimensional balls that are a proper subset of $X_{free}$. 
Often, hitting $\textit{all}$ the sets in $\F$ with a finite number of points in $Y$ is too tall an order to ask. 
Only hitting the `voluminous' sets in $\F$ is enough for our purposes.

\begin{definition} \label{def:e-transversal}
    Let $(Y, \F)$ be a range space. Suppose that $\mu$ is a measure on $Y$ such that every $S \in \mathcal F$ is measurable with respect to $\mu$. 
    
    A finite subset $N \subset Y$ is an $\epsilon$-transversal if for all $S \in \mathcal F$ such that $\mu(S) \geq \epsilon \cdot \mu(Y)$, then $N \cap S \neq \emptyset$.
\end{definition}

Intuitively, $\epsilon$ is a \textit{threshold} on the minimum volume of  $S \in \F$ to be hit by a point in $N$. 
There could be many different $\epsilon$-transversals on $(Y, \F)$, and we prefer to find smaller $\epsilon$-transversals if we can. 
Blumer et al. \cite{blumer_learnability_1989}, generalizing Welzl and Haussler's work \cite{haussler_epsilon-nets_1986}  from discrete to continuous range spaces showed that random sampling can find $\epsilon$-transversals with high probability.
We can derive the following statement by combining Theorem A2.1, Proposition A2.1, and Lemma A2.4 by Blumer et al. \cite{blumer_learnability_1989}:

\begin{theorem}\label{thm:blumer_e_net_bound}
   Let $(Y, \mathcal F)$ be a range space with VC-dimension $\VC(\F)$ and  $Y \subset \R^n$. 
   Let $\PP$ be a probability measure on $Y$ such that all $S \in \mathcal F$ is measurable with respect to $\PP$. 
   
   Let $N \subset Y$ be a set of $m$ independent random samples of $Y$ with respect to $\PP$, where
   $$
   m \geq \max \curlyb{
    \frac 4 \epsilon \log_2 \frac 2 \gamma,
    \frac {8d} \epsilon \log_2 \frac {13} \epsilon
   },
   $$
    
    \noindent then $N$ is an $\epsilon$-transversal of $Y$ with probability at least $1-\gamma$.
\end{theorem}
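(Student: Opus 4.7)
The plan is to follow the classical double-sampling (symmetrization) argument of Vapnik--Chervonenkis as packaged by Blumer et al., which the paper explicitly cites. First I would define the bad event $A$ that $N$ fails to be an $\epsilon$-transversal, i.e., there exists $S \in \F$ with $\PP(S) \ge \epsilon$ but $N \cap S = \emptyset$. The obstruction to bounding $\PP(A)$ directly is that $\F$ can be uncountably infinite, so a naive union bound over ranges is unavailable.

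To finesse this, I would introduce an independent \emph{ghost sample} $N'$ of $m$ additional i.i.d. draws and the companion event $B$ that some witness $S$ is missed by $N$ while $|N' \cap S| \ge m\epsilon/2$. A Chernoff/binomial tail estimate on $|N' \cap S|$, which stochastically dominates a $\mathrm{Binomial}(m,\epsilon)$ conditional on any fixed bad $S$, yields $\PP(B \mid A) \ge 1/2$ once $m \ge (4/\epsilon)\log_2(2/\gamma)$, giving $\PP(A) \le 2\,\PP(B)$. This is the role of the first argument inside the maximum in the sample-size hypothesis.

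Next I would pass to the conditional law of $(N, N')$ given the unordered multiset $N \cup N'$ of $2m$ points. By exchangeability this conditional law is uniform over all $\binom{2m}{m}$ ways of splitting the multiset into two halves. On the finite ground set of $2m$ points, $\F$ projects to at most $\sum_{i=0}^{d} \binom{2m}{i} \le (2em/d)^{d}$ distinct subsets by the Sauer--Shelah lemma, where $d = \VC(\F)$. For any projected range $S$ whose intersection with $N \cup N'$ has size $k \ge m\epsilon/2$, the probability that all $k$ of these points land in $N'$ rather than $N$ is at most $2^{-k} \le 2^{-m\epsilon/2}$ by the swap symmetry. Union-bounding over the projected ranges yields
\begin{equation*}
\PP(A) \;\le\; 2 \left(\frac{2em}{d}\right)^{d} 2^{-m\epsilon/2}.
\end{equation*}

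Finally I would verify algebraically that the second argument inside the maximum, $m \ge (8d/\epsilon)\log_2(13/\epsilon)$, forces the right-hand side below $\gamma$ (this is essentially Blumer et al.\ Lemma A2.4, a routine but slightly delicate calculation using $\log_2 m \le m\epsilon/(16d) + \log_2(16d/\epsilon)$). The main obstacle is executing the symmetrization step cleanly in the \emph{continuous} range-space setting: one must argue that the witness $S$ appearing in the definition of $B$ can be chosen as a measurable function of $(N, N')$, a measurable-selection subtlety that is absent in the discrete Haussler--Welzl treatment and is the chief technical contribution of Blumer et al.\ over earlier versions.
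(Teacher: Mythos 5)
Your proposal is correct and is essentially the same argument the paper relies on: the paper does not reprove this theorem but cites Blumer et al.'s Theorem A2.1, Proposition A2.1, and Lemma A2.4, whose proof is precisely your ghost-sample symmetrization plus Sauer--Shelah projection, and the intermediate inequality $\PP(A) \le \paren{\sum_{i}\binom{2m}{i}}2^{-\epsilon m/2}$ you derive is the very expression the paper's numerical algorithm searches over (Eq.~\ref{eq:numer_search_exp}). The only quibble is bookkeeping: in Blumer et al.\ the first term of the maximum is what forces $2\cdot 2^{-\epsilon m/4}\le\gamma$ in the closing algebra of Lemma A2.4 (the second term handles $\Phi_d(2m)\le 2^{\epsilon m/4}$), whereas the step $\PP(B\mid A)\ge 1/2$ needs only $m\ge 8/\epsilon$, which the second term already supplies --- a harmless misattribution that does not affect the validity of your argument.
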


Intuitively, the VC-dimension quantifies the \textit{complexity}, or intricacy, of how the sets in $\F$ intersect with $Y$ and each other.
We now move to the technical definition of the VC-dimension:
\begin{definition}
    Let $(Y, \F)$ be a range space. We say that a subset $A \subset Y$ is \textit{shattered} by $\F$ if each of the subsets of $A$ can be obtained as the intersection of some $S \in \F$ with $A$. 
    We define the \textit{VC-dimension} of $\F$ as the supremum of the sizes of all finite shattered subsets of $Y$.
    If arbitrarily large subsets can be shattered, the VC-dimension is $\infty$.
\end{definition}

\noindent
Our argument only requires that the VC-dimension of a set of $d$-dimensional (Euclidean) balls in $\R^d$ is $d+1$ \cite{matousek_lectures_2002,mustafa_sampling_2022}. 
When uniformly sampling from a space $X_{free}$, we are in a favorable scenario where the definitions of $\epsilon$-transversal (hitting set of balls in $X_{free}$)  $\epsilon$-net (covering $X_{free}$ with balls) defined in Def. \ref{def:e-net} coincide (but for different $\epsilon$'s).

\begin{lemma}\label{lem:geom_e-net_iff_comb_e-net}
    Let $Y \subseteq \R^d$, and $\PP$ be a uniform probability measure. 
    Let $N \subset Y$ be a finite subset. 
    
    $N$ is an $\epsilon$-net of $Y$ if and only if $N$ is an  $\epsilon^d\PP(B^d_1)$-transversal on the range space $(Y, \mathcal F)$ where $\mathcal F$ is the set of $d$-dimensional balls in $\mathbb R^d$ that have non-empty intersection with $Y$.
\end{lemma}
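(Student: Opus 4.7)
The plan is to prove this as a pair of implications, with the geometric content coming from the scaling behaviour of Euclidean balls under the uniform measure $\PP$. I will treat $\PP$ with the paper's implicit convention that $\PP(B_\alpha^d) = \alpha^d \PP(B_1^d)$ is a location-independent quantity (obtained by normalizing Lebesgue measure on $\R^d$ by $\text{vol}(Y)$).

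I would begin with the backward direction ($\epsilon^d \PP(B_1^d)$-transversal $\Rightarrow$ $\epsilon$-net), because this is the direction that actually feeds into Lemma \ref{lem:blumer_e_net_bound}. The argument is constructive: for any $x \in Y$, let $S = B_\epsilon^d(x)$. Since $x \in S \cap Y$, we have $S \in \F$, and the scaling relation gives $\PP(S) = \epsilon^d \PP(B_1^d)$, meeting the transversal threshold. The transversal hypothesis then yields some $x_n \in N \cap S$, which by the definition of $S$ satisfies $\|x - x_n\|_2 \leq \epsilon$, matching the $\alpha$-net property of Definition \ref{def:e-net} with $\alpha = \epsilon$.

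For the forward direction ($\epsilon$-net $\Rightarrow$ $\epsilon^d \PP(B_1^d)$-transversal), I would take an arbitrary $S = B_r^d(x_0) \in \F$ with $\PP(S) \geq \epsilon^d \PP(B_1^d)$. Applying the scaling identity as an upper bound on $\PP(S)$ forces $r \geq \epsilon$, and then I would produce a net point inside $S$ by applying the $\epsilon$-net property at some $y \in S \cap Y$ whose $\epsilon$-neighbourhood lies inside $S$ (the natural candidate being the centre $x_0$, whenever it lies in $Y$).

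The main obstacle I anticipate sits in the forward direction: if $S \cap Y$ is a thin sliver near $\partial Y$, there may be no interior point of $S \cap Y$ whose $\epsilon$-ball stays inside $S$, and the triangle inequality only yields $\|x_n - x_0\|_2 \leq r + \epsilon$ rather than $\leq r$. I expect the cleanest way out is to reduce to the case $S \subseteq Y$ and let the general case follow by continuity of $\PP$ in the ball's centre and radius, or to tighten the working interpretation of $\F$ in the equivalence. Crucially, only the backward direction is needed to derive the $\alpha$-net guarantee of Lemma \ref{lem:blumer_e_net_bound} from Theorem \ref{thm:blumer_e_net_bound}, so this boundary subtlety does not affect the downstream applications in the paper.
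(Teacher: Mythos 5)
Your proposal matches the paper's proof essentially verbatim: both directions rest on the same radius-to-volume conversion, and you correctly identify that only the transversal-implies-net direction is consumed in deriving Lemma \ref{lem:blumer_e_net_bound} from Theorem \ref{thm:blumer_e_net_bound}. The boundary subtlety you flag in the forward direction is real but is present (and unaddressed) in the paper's own proof as well---it applies the $\epsilon$-net definition at the ball's center $x_S$ without verifying $x_S \in Y$, and the backward direction likewise implicitly takes $\PP\paren{B^d_\epsilon(x)} = \epsilon^d \PP\paren{B^d_1}$ regardless of how the ball overlaps the boundary of $Y$---so your proposed workarounds go beyond what the paper actually does.
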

\begin{proof}

    The essence of this proof is a radius-volume conversion, which is only possible since we are working with range spaces with spheres.
   
   We start with the forward direction. 
   Suppose $N$ is an $\epsilon$-net. 
   Let $S \in \mathcal F$ such that $\PP(S) \geq \epsilon^d \PP(B_1^d) \cdot \PP(Y) = \epsilon^d \PP(B_1^d)$ (i.e., $S$'s volume is larger than that of a $d$-dimensional ball with radius $\epsilon$). 
   Since $S$ is a ball, its radius must be larger than $\epsilon$. Let $x_S$ be its center. 
   By the definition of geometric $\epsilon$-net, there exists $x_n \in N$ such that $||x_c - x_n|| \leq \epsilon$, so $x_n \in S$.

   The backward direction is even shorter. Let $N$ be a $\epsilon^d \PP(B_1^d)$-transversal. 
   Let $x \in Y$. Let $S$ be the $d$-dimensional ball centered at $x$. 
   Then there must exist $x_n \in N$ such that $x_n \in S$, so $||x - x_n|| \leq \epsilon$.\hfill$\blacksquare$
\end{proof}
\noindent
We now have all the ingredients we require for Lemma \ref{lem:blumer_e_net_bound}.

\begin{proof}[Lemma \ref{lem:blumer_e_net_bound}]
    By Lemma \ref{lem:geom_e-net_iff_comb_e-net}, $\epsilon^d\PP(B^d(1))$-transversals is an $\epsilon$-net.
    We apply Lemma \ref{thm:blumer_e_net_bound}, and the result follows.
    \hfill$\blacksquare$
\end{proof}

\subsection{Proofs of Lemma \ref{lem:r_knn_is_well_posed} and Prop. \ref{prop:knn_rad_decay}}
The proof of Lemma \ref{lem:r_knn_is_well_posed} is a simple proof by contradiction. 
The intuition for the proof is pictorially displayed in Fig. \ref{fig:knn_bound_hypo_ratio}.
 
\begin{proof}[Lemma \ref{lem:r_knn_is_well_posed}]
    We start with the forward direction. 
    Let $r \in \R$ such that $0 < r < \min_{v \in V} \norm{v - \nn_{k+1}(v)}$. 
    Suppose, for the sake of contradiction, that there exists $u, v \in V$ such that \textsc{CollisionFree}$(u, v)$ returns \textsc{True} but $(u, v) \notin E$.
    Then $v$ must not be a $K$th nearest neighbor of $u$, or vice versa. Without loss of generality, assume the former, and so $v$ must be at least a $(K + 1)$th nearest neighbor of $u$.
    But then $r < \min_{v \in V} \norm{v - \nn_{k+1}(v)} < \norm{u - v}$, so we have a contradiction. 

    The proof of reverse direction nearly follows the definition of an effective connection radius.
    Suppose $r \geq \min_{v \in V} || v - \nn_{k+1}(v)||$. 
    Let $v^*$ be the minimizing vertex in the expression above. 
    Then $(v^*, \nn_{k+1}(v))$ is a non-edge and $\norm{v^* - \nn_{k+1}(v^*)} \leq r$.\hfill$\blacksquare$
\end{proof}

Our proof of Prop. \ref{prop:knn_rad_decay} is loosely inspired by the proof for Lemma 58 presented by Karaman and Frazzoli \cite{karaman_sampling-based_2011}. 

\begin{proof}[Proposition \ref{prop:knn_rad_decay}]
We prove our bound by controlling the number of samples that land within a ball of radius $r$  of another sample using a Chernoff bound.

Let $r = \min_{v \in V} \norm{v - \nn_{k+1}(v)}$.
Let $I_{v, w}$ be a Bernoulli random variable where vertex $w$ falls within radius $r$ for vertex $v$. We aim to find a lower bound to the probability the event:
\begin{equation}\label{eq:knn-rad-event}
\PP\paren{\forall v \in V, \sum_{w \in V \backslash \{v\}} I_{v, w}(r) < K + 1}
\end{equation}
\noindent
It is far easier to reason about an upper bound of the complement of this event and take a union bound over all vertices:
\begin{align*}
&\PP\paren{\exists v \in V, \sum_{w \in V \backslash \{v\}} I_{v, w}(r) \geq K + 1}\\
\leq 
&\sum_{v \in V} \PP\paren{\sum_{w \in V \backslash \{v\}} I_{v, w}(r) \geq K + 1}\\
\leq
&N \cdot \PP\paren{v \in V \text{ s.t. } B_r^d(v) \subset X, \sum_{w \in V \backslash \{v\}} I_{v, w}(r) \geq K + 1}
\end{align*}
The second line takes the union bound over all vertices.
The third line applies the bound $\PP(B_r^d(v) \cap X) \leq \PP(B_r^d(v))$.
To obtain a tail estimate on the binomial random variable above, we use a Chernoff bound stated by Boucheron et al. \cite{boucheron_concentration_2013}:
\begin{align*}
&\PP\paren{v \in V \text{ s.t. } B_r^d(v) \subset X, \sum_{w \in V \backslash \{v\}} I_{v, w}(r) \geq K + 1}\\
\leq
&\exp \curlyb{-D_{KL}(p + t || p) (N-1)}
\end{align*}
where $D_{KL}$ is the KL-divergence between two Bernoulli variables, and $p = \PP(B_r^d)$ and $t = K / (n-1) - p$ (our bound on $r$ will avoid a degenerate $t \leq 0$).
Using the well-known inequality $D_{KL}(p_1 || p_2) \geq (p_1 - p_2)^2 / 2p_1$ when $p_1 > p_2$:
\begin{align*}
\exp \curlyb{-D_{KL}(p + t || p) (N-1)}
&\leq \exp\curlyb{\frac{-(N-1)t^2}{2(p + t)}}\\
&\leq \exp\curlyb{
    - (N-1)\frac{
        \paren{\frac K {N - 1} - p}^2
    }{
        2 \paren{\frac K {N-1}}
    }
}\\
&= \exp\curlyb{
    - \frac{
        (K - p(N-1))^2
    }{
        2K 
    }
}
\end{align*}
Substituting the bound above back into our union bound, plugging in $p = \PP(B_r^d)$, imposing that the last expression be less than $\gamma > 0$, and solving for $r$ yields:.
$$
\PP(B_r^d) = p \leq \frac{K - \sqrt{2K(\log N - \log \gamma)}}{N-1}
$$

\begin{equation}\label{eqn:sufficeint_conn_upperbound}
r \leq \squareb{
    \frac{K - \sqrt{2K(\log N - \log \gamma)}}{(N-1)\PP(B_1^d)}
}^{1/d}
\end{equation}
What we have shown is that if we choose a ball radius $r$ as expressed in Equation \ref{eqn:sufficeint_conn_upperbound}, then each ball of radius $r$ centered at each vertex in the PRM contains less than $K + 1$ other vertices with probability at least $1 - \gamma$.
By the construction of the event in Eq. \ref{eq:knn-rad-event}, we know that:

$$
\squareb{
    \frac{K - \sqrt{2K(\log N - \log \gamma)}}{(N-1)\PP(B_1^d)}
}^{1/d}
< 
\min_{v \in V} \norm{v - \nn_{k+1}(v)},
$$
with probability $1 - \gamma$.\hfill$\blacksquare$
\end{proof}
\noindent
The conservative $\log N$ factor inside the radical may be removed with a tighter analysis that does not rely on a union bound over all vertices.

\subsection{Numerical Algorithm Correctness Proof and Additional Results}
\subsubsection{Analysis of Correctness of Algorithm \ref{alg:numerical_bound}}

We begin by stating the following theorem that formalizes the correctness of Algorithm \ref{alg:numerical_bound} and then sketch out the proof:

\begin{theorem}
    Let $f: \Z^+ \to \R^+$ denote the function represents the Eq. \ref{eq:numer_search_exp}.
    $$f(m) = \paren{\sum_{i=1}^{d'} \binom{2m}{i}} \cdot 2^{-cm}$$
    for $c > 0$, and $d' \in \Z^+$, which are computed by the input to Alg. \ref{alg:numerical_bound}.

    Let the input failure probability be $\gamma \in (0, 1)$. Then Alg. \ref{alg:numerical_bound} will return $m^*$ such that $f(m^*-1) > \gamma > f(m^*)$.
\end{theorem}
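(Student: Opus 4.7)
The plan is to reduce correctness to unimodality of $f(m) = S(m)\cdot 2^{-cm}$, where $S(m)=\sum_{i=1}^{d'}\binom{2m}{i}$. Once $f$ is shown to have a unique mode $m_0$ with $f$ non-decreasing on $[1,m_0]$ and strictly decreasing on $[m_0,\infty)$, together with $f(m)\to 0$, the predicate
\[
P(m)\;\equiv\;\bigl[f(m+1)<f(m)\bigr]\wedge\bigl[f(m)<\gamma\bigr]
\]
is monotone in the ``false-then-true'' sense on $\Z^+$: it holds precisely for $m \geq \max(m_0,\,m_\gamma)$, where $m_\gamma$ is the smallest integer with $f(m_\gamma)<\gamma$. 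This is exactly the structure an exponential-plus-binary search requires.

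The main obstacle is the unimodality claim. I would study the ratio
\[
\frac{f(m+1)}{f(m)} \;=\; 2^{-c}\cdot\frac{S(m+1)}{S(m)}.
\]
Since $S$ is a polynomial of degree $d'$ in $m$ with positive coefficients, $S(m+1)/S(m)\to 1$ as $m\to\infty$, so the ratio is eventually below $1$, yielding both the eventual strict decrease and $f(m)\to 0$. To upgrade ``eventually decreasing'' to a single crossing of $1$, I would write
\[
\frac{S(m+1)}{S(m)} \;=\; \sum_{i=1}^{d'} w_i(m)\,r_i(m),
\]
with nonnegative weights $w_i(m)=\binom{2m}{i}/S(m)$ summing to $1$ and termwise ratios $r_i(m)=(2m+2)(2m+1)/((2m+2-i)(2m+1-i))$. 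Each $r_i(m)$ is strictly decreasing in $m$. The delicate point, and the technical heart of the argument, is that the weights $w_i$ shift toward the larger indices $i$ (where $r_i$ is larger) as $m$ grows, so monotonicity of the weighted average demands either a direct ``competing effects'' estimate or an auxiliary log-concavity property of $S$; this is where the bulk of the work lies.

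Given unimodality, the remainder is routine. The doubling phase terminates because once $N_u \geq \max(m_0,m_\gamma)$, both clauses of $P(N_u)$ hold, and doubling from $1$ reaches this regime in $O(\log m^*)$ iterations. For the binary search, I would carry the loop invariant that $P(N_l)$ is false and $P(N_u)$ is true, which holds on entry (by the doubling loop's exit condition and by the non-triviality assumption that $P(1)$ is false, i.e., that the peak of $f$ exceeds $\gamma$) and is preserved by both branches of the conditional. Termination with $N_l+1=N_u$ then makes $m^* := N_u$ the least integer satisfying $P$. Since $m^*-1 \geq m_0$ in the non-trivial regime, the first clause of $P(m^*-1)$ holds; minimality therefore forces the second clause to fail, giving $f(m^*-1)\geq\gamma$, and combining with $f(m^*)<\gamma$ yields the bracketing $f(m^*-1) > \gamma > f(m^*)$ asserted by the theorem.
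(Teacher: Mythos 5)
Your overall strategy is the same as the paper's: reduce correctness of Alg.~\ref{alg:numerical_bound} to unimodality of $f$ (non-decreasing up to a mode $m_0$, then strictly decreasing to $0$), and then verify that the doubling phase lands past $\max(m_0,m_\gamma)$ and that the binary search maintains the ``predicate false at $N_l$, true at $N_u$'' invariant. Your treatment of the search itself is careful and correct --- in fact more explicit than the paper's about why the first clause of the predicate (checking that $f$ is locally decreasing) keeps $N_l$ from overshooting when $f(N_{mid})<\gamma$ on the \emph{increasing} side of the mode.

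The genuine gap is that you do not actually prove the unimodality you reduce to; you name it as ``the technical heart'' and stop. Everything downstream depends on the claim that the ratio $f(m+1)/f(m)=2^{-c}\,S(m+1)/S(m)$ crosses $1$ at most once, and your weighted-average decomposition correctly exposes why this is not automatic: the termwise ratios $r_i(m)$ decrease in $m$, but the weights $w_i(m)$ migrate toward the large-$i$ terms where $r_i$ is largest, so the two effects compete and no conclusion follows without further work. (Positivity of the coefficients of $S$ alone does not suffice: polynomials with nonnegative coefficients need not be log-concave near the origin, so one must use the specific structure of $S(m)=\sum_{i=1}^{d'}\binom{2m}{i}$.) To be fair, the paper's own proof is equally thin at exactly this point --- it asserts the two-case dichotomy ``from the form of the expression'' after writing down $f'$ --- so you have correctly isolated where the real content lies; but as submitted your argument establishes only ``eventually decreasing,'' which is not enough for the binary search to be sound on the increasing side of the mode. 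A concrete way to close the gap is to show that the integer sequence $m\mapsto S(m)$ is log-concave, i.e.\ $S(m)^2\ge S(m-1)S(m+1)$, which makes $S(m+1)/S(m)$ (hence $f(m+1)/f(m)$) monotonically decreasing and yields the single crossing directly. Two smaller points: minimality of $m^*$ gives only $f(m^*-1)\ge\gamma$, not the strict inequality in the theorem statement (a looseness shared with the paper), and your ``non-triviality'' caveat about $P(1)$ is real --- if $f(1)<\gamma$ the algorithm returns $m^*=1$ and the bracketing claim about $f(m^*-1)$ is vacuous or false --- so it deserves to be stated as an explicit hypothesis rather than an aside.
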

\begin{proof}
    By the definition of the binomial coefficient operation, we know that the left-hand term is a $d'$-degree polynomial that is monotonically increasing in $m$ with all positive coefficients:
    $$
    \sum_{i=1}^{d'} \binom{2m}{i} = \sum_{i=0}^{d'} a_i m^i, \mbox{ } a_i \geq 0 \forall i=0, \dots, d'
    $$
\noindent
We then compute the derivative of $f$. Using the product rule, we see that:
    $$
    f'(m) = \paren{\sum_{i=1}^{d'} ia_{i-1}m^{i-1}} \cdot 2^{-cm} - cm \paren{\sum_{i=0}^{d'} a_{i}m^{i}} \cdot (2^{-cm} \log 2)
    $$
    We observe from the form of the expression (lower order polynomial and exponential on the left term, higher order polynomial and linear factor on the right term) and conclude two possibilities:
    
     \begin{enumerate}
        \item $f'(m) < 0$ for all $m \geq 1$  ($f$ is monotonically decreasing), or
        \item $f'(m) \geq 0$ for some $k > m$, and then $f(m) < 0$ for all $m > k$ ($f$ is monotonically increasing until $k$, and then starts monotonically decreasing).
    \end{enumerate}

    \noindent
    If we are in the first case, we write $k = 0$ for the convenience of the rest of the analysis.
    In either case, the doubling search of the numerical algorithm will double $N_u$ until $N_u > k$ and $f(N_u) \leq \gamma$.
    Thus, we observe that the binary search will preserve two invariants:
    
     \begin{enumerate}
        \item $N_l \leq m^* \leq N_u$. The left inequality is true because the search increases $N_l$ if either $f(\floor{(N_l + N_u) / 2}) > \gamma$ or $N_l \leq k$ (by the check that verifies $f$ is decreasing via the left inequality on Line 6). The right inequality is true via the right inequality on Line 6.
        \item The intervals close by half each time.
    \end{enumerate}
    
    \noindent
    Thus, the algorithm will return the correct answer since we are searching over a discrete set of integers while the invariants hold.
\end{proof}

\subsubsection{Numerical Algorithm for Proposition \ref{prop:knn_rad_decay} and Tightness Experiments}

As mentioned in Section \ref{sec:consequence}, we write an analogous algorithm (Alg. \ref{alg:numerical_radius_bound}) to compute the \textit{largest} effective connection radius given a failure of probability $\gamma$ (and other technical details of the configuration space).
The algorithm performs a binary search up to a prespecified additive approximation error.
The correctness of the algorithm follows from a similar argument provided for Alg. \ref{alg:numerical_bound} above.

\begin{algorithm}[H]                \caption{\textsc{NumericalRadiusBound}}\label{alg:numerical_radius_bound}
    \begin{algorithmic}[1]
        \Require $1 > \gamma > 0$, $\PP(B_1^d)$ volume of unit ball by uniform measure on c-space $X$, $K$ number of neighbors, $n$ number of samples, $r_u$ maximum possible radius in $X$, and maximum $\epsilon > 0$ additive error of solution.
        \State $r_l \gets \epsilon$
        \State $p \gets K / (n - 1)$
        \While{$r_u - r_l > \epsilon$}
            \State $r_{mid} \gets (r_l + r_u) / 2$
            \State $q_{mid} \gets r_{mid}^d \PP(B_1^d)$
            \State $q_{mid-} \gets (r_{mid} - \epsilon / 2)^d \PP(B_1^d)$
            \Comment{Ensure probability bound is decreasing.}
            \If{$e^{-D_{KL}(p || q_{mid-})} \geq \exp^{-D_{KL}(p || q_{mid})}$ \text{ or } $(n-1)\exp^{-D_{KL}(p || q_{mid})} > \gamma$}
                \State $r_u \gets r_{mid}$  \Comment{Binary search to ensure radius corresponding to $\gamma$ is in $[r_l, r_u]$.}
            \Else
                \State $r_l \gets r_{mid}$
            \EndIf
        \EndWhile\\
        \Return{$r_u$}
    \end{algorithmic}
\end{algorithm}
\vspace{-0.5cm}

To verify the tightness of Prop. \ref{prop:knn_rad_decay} and its numerical algorithm \ref{alg:numerical_radius_bound}, we sample uniformly from a unit cube, run an approximate $K$-nearest neighbors algorithm, and compute the effective radius upper bound for varying numbers of neighbors, samples, and dimensions.
The experiment is repeated for a variety of dimensions and nearest-neighbor counts.
We see that the bound is overly conservative with small sample sets and then progressively tightens as samples increase (with a maximum number of samples of $n = 10^6$).
For $d=2,3$ the numerical bound tightens quickly but does not fully converge for $d = 20$ within $10^6$ samples.
Eventually, as $n \to \infty$, we expect the bound to become loose again because of the conservative logarithmic factor.
\begin{figure}
    \centering
    \includegraphics[width=\textwidth]{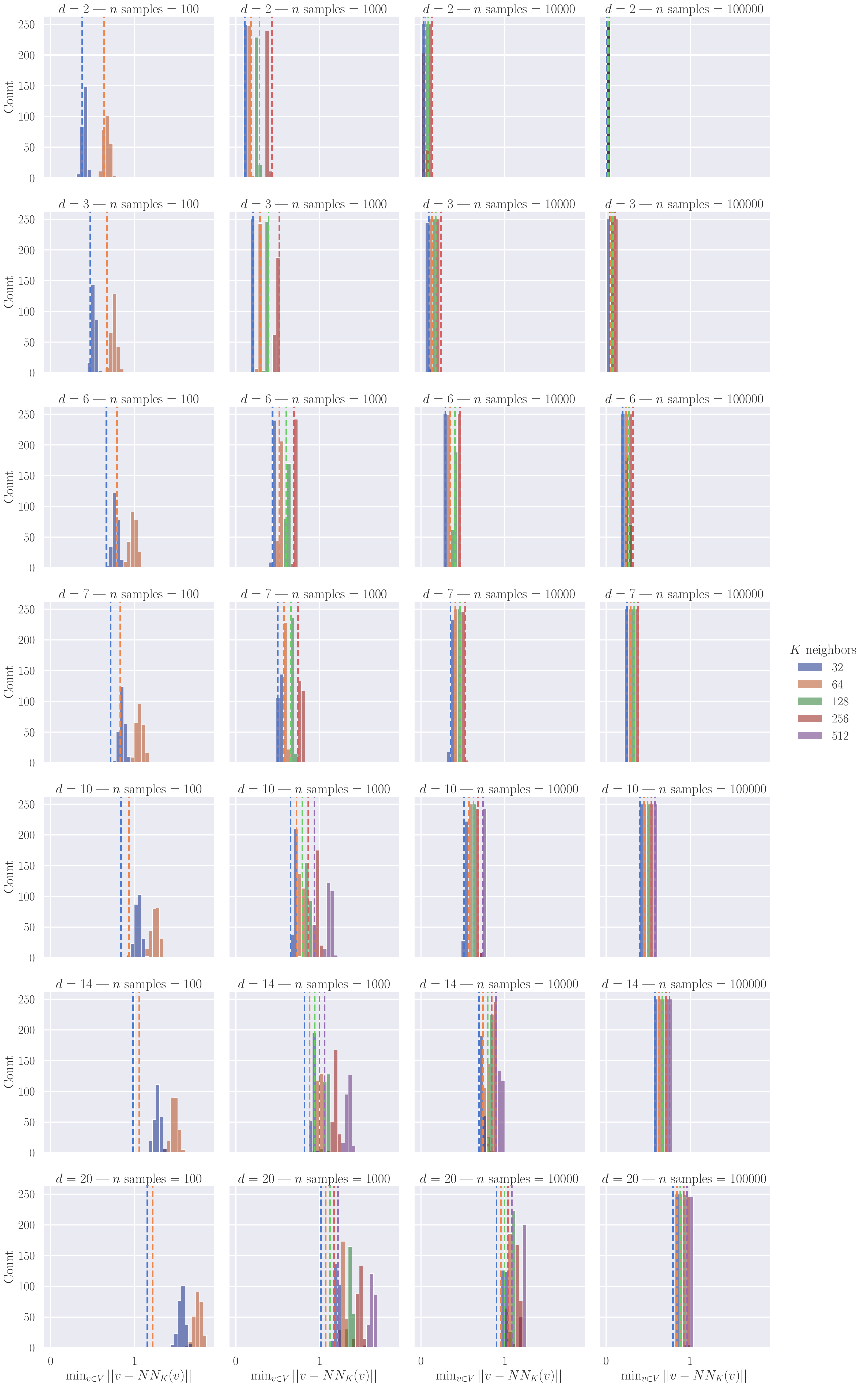}
    \caption{The largest connection radius computed by the numerical bound by setting the probability of failure $\gamma = 0.01$ is represented by corresponding dotted vertical bars. Their right-to-left ordering is the same as the ordering of the histograms.} 
    \label{fig:knn_empirical}
\end{figure}

\subsection{Code Implementation}

The code that implements all the algorithms and experiments described in this paper is available in a public repository: \texttt{https://github.com/robustrobotics/nonasymptotic-mp}.

\end{document}